\documentclass{article}

\usepackage{microtype}
\usepackage{graphicx}
\usepackage{subcaption}
\usepackage{booktabs} 
\usepackage[table]{xcolor} 
\usepackage{enumitem}
\colorlet{ours}{blue!10}

\usepackage{hyperref}

\usepackage[preprint]{icml2026}

\usepackage{amsmath}
\usepackage{amssymb}
\usepackage{mathtools}
\usepackage{amsthm}

\usepackage{comment}
\usepackage{verbatim} 
\usepackage{amsmath}
\usepackage{amssymb}
\usepackage{mathtools}
\usepackage{amsthm}
\usepackage{algorithm}
\usepackage{algorithmic}
\usepackage{bm}
\usepackage{pgfplots}
\usepackage{graphicx,booktabs,multirow}
\usepackage{subcaption}

\usepackage[capitalize,noabbrev]{cleveref}

\usepackage{tcolorbox}
\usepackage{tabularx}
\usepackage{pifont}

\theoremstyle{plain}
\newtheorem{theorem}{Theorem}[section]

\newtheorem{lemma}[theorem]{Lemma}

\theoremstyle{definition}
\newtheorem{definition}[theorem]{Definition}
\newtheorem{assumption}[theorem]{Assumption}
\theoremstyle{remark}

\newcommand {\com}[1]{\tiny$\pm$#1}

\usepackage[textsize=tiny]{todonotes}

\icmltitlerunning{Communication-Efficient Personalized Adaptation via Federated-Local Model Merging}

\begin{document}

\twocolumn[
  \icmltitle{Communication-Efficient Personalized Adaptation 
  \\ via Federated-Local Model Merging}



  \icmlsetsymbol{equal}{*}

  \begin{icmlauthorlist}
    \icmlauthor{Yinan Zou}{yyy}
    \icmlauthor{Md Kamran Chowdhury Shisher}{yyy}
    \icmlauthor{Christopher G. Brinton}{yyy}
    \icmlauthor{Vishrant Tripathi}{yyy}
  \end{icmlauthorlist}

  \icmlaffiliation{yyy}{Elmore Family School of Electrical and Computer Engineering, Purdue University}

  \icmlcorrespondingauthor{Yinan Zou}{zou211@purdue.edu}

  \icmlkeywords{Machine Learning, ICML}

  \vskip 0.3in
]



\printAffiliationsAndNotice{}  

\begin{abstract}

Parameter-efficient fine-tuning methods, such as LoRA, offer a practical way to adapt large vision and language models to client tasks. However, this becomes particularly challenging under task-level heterogeneity in federated deployments. 
In this regime, personalization requires balancing \emph{general knowledge} with \emph{personalized knowledge}, yet existing approaches largely rely on heuristic mixing rules and lack theoretical justification. Moreover, prior model merging approaches are also computation and communication intensive, making the process inefficient in federated settings. In this work, we propose \textsc{Potara}, a principled framework for federated personalization that constructs a personalized model for each client by merging two complementary models: (i) a federated model capturing \emph{general knowledge}, and (ii) a local model capturing \emph{personalized knowledge}. Through the construct of linear mode connectivity, we show that the expected task loss admits a variance trace upper bound, whose minimization yields closed-form optimal mixing weights that guarantee a tighter bound for the merged model than for either the federated or local model alone.
Experiments on vision and language benchmarks show that \textsc{Potara} consistently improves personalization while reducing communication, leading to a strong performance-communication trade-off.
\end{abstract}

\section{Introduction}

Large vision models (LVMs) and large language models (LLMs) have achieved remarkable progress, enabling their widespread adoption across diverse downstream applications \citep{zhao2023survey,awais2025foundation}.
Adapting pre-trained models to domain specific tasks typically requires fine-tuning on task relevant data, but full parameter fine-tuning is computationally expensive and impractical at scale. 
To mitigate this cost, parameter-efficient fine-tuning methods have been developed, among which low-rank adaptation (LoRA) \citep{hu2022lora} has become particularly popular.
Domain specific fine-tuning of large models often requires large heterogeneous datasets which are typically split across organizations and cannot be centralized due to privacy and security constraints.
Federated learning (FL) \citep{mcmahan2017communication} addresses this issue by enabling collaborative training while keeping raw data local.

Federated large model fine-tuning \citep{zhang2024towards} integrates FL with LoRA, allowing clients to update and communicate only LoRA modules while jointly learning a global model.
Despite its promise, federated LoRA fine-tuning faces pronounced \emph{task heterogeneity}: different clients may pursue distinct, and sometimes even conflicting, objectives.
In such settings, the knowledge required by each client naturally decomposes into two complementary components:
(i) \emph{general knowledge} that can be transferred across clients and tasks, and
(ii) \emph{personalized knowledge} that is specific to each client’s local data and task.

\textbf{Challenges.} A central challenge is how to balance cross-client \emph{general knowledge} and client-specific \emph{personalized knowledge}. 
It remains unclear, from a principled theoretical perspective, how much emphasis should be placed on general versus personalized knowledge when forming the final client model.
Although recent methods attempt to provide a solution \citep{yang2024dual,bian2025fedalt}, they lack theoretical justification, leaving a gap for theoretically-grounded solutions.
Additionally, communication is the primary bottleneck in FL, and LoRA-based training still requires the repeated transmission of parameters across many rounds. 
These considerations call for an approach that (i) explicitly control the general-personalized trade-off in a principled manner and (ii) achieve favorable performance–communication trade-off in realistic federated deployments.

\subsection{Contributions}
Motivated by these limitations, this work aims to develop a framework for efficient federated fine-tuning under task heterogeneity.
Our key contributions are as follows:
\vspace{-3mm}
\begin{itemize}
    \item We propose \textsc{Potara}\footnote{\textsc{Potara} derives its name from the Potara fusion in Dragon Ball, where two independent entities are merged into a single, and more powerful form.}, a federated LoRA personalization framework to jointly (i) achieve an effective trade-off between federation and personalization, and (ii) balance communication efficiency with performance. 
    \textsc{Potara} constructs a personalized model for each client by merging two complementary models: a collaborative federated model that captures \textit{general knowledge} across all clients, and a local fine-tuning model that emphasizes \textit{personalized knowledge}.
    \item 

    To determine the mixing weight between the federated model and the local model, we develop a rigorous analysis grounded in linear mode connectivity (LMC). 
    We show that, within an LMC basin around the optimum, the expected excess task losses of the federated model, the local model, and their merged model admits an upper bound in which the key term is a variance trace (see Theorem \ref{loss_bound}).
    Minimizing this bound yields closed-form optimal mixing weights, which guarantee that the merged model achieves the tighter bound than both the federated model and the local model (see Lemma \ref{corollary}).

    \item 
    We conduct extensive experiments on vision and language benchmarks. The experimental results show that our method achieves superior performance compared to baseline methods while significantly reducing communication overhead, thereby offering a favorable performance-communication trade-off.

\end{itemize}

\vspace{-5mm}
\subsection{Related Work}

\textbf{Model Merging.}
Model merging aims to combine multiple task-specific models into a single model without additional training.
\citet{wortsman2022model} utilized averaged parameters to construct the merged model.
To account for parameter importance, \citet{matena2022merging} proposed weighted fusion, where merging weights are derived from the Fisher information matrix~\citep{fisher1922mathematical}.
\citet{jin2022dataless} formulated model merging as a linear regression problem and obtained a closed-form solution.
Instead of averaging weights, \citet{ilharco2023editing} introduced task vectors by subtracting the pre-trained parameters from each fine-tuned model, and then linearly combining these task vectors to form a merged model.
\citet{yadav2023ties} proposed to reduces conflicts in task vectors~\citep{ilharco2023editing} by trimming low-magnitude updates, resolving sign disagreements, and merging disjoint parameter subsets with consistent signs.
\citet{yu2024language} further explored this issue by randomly dropping a subset of fine-tuned weights and rescaling the remainder, yielding sparse task vectors that alleviate destructive interference.
Most existing methods \citep{ilharco2023editing,yadav2023ties,yu2024language,qicabs,stoicamodel} select merging weights via heuristic tuning (e.g., grid search), which is computation intensive.
In contrast, our approach derives the optimal mixing weights from theory, substantially improving efficiency by eliminating expensive per-client weight sweeps and repeated evaluations.

\textbf{Personalized Federated Learning.} Personalized federated learning aims to overcome the limitation of a single global model by adapting client models to their local data distribution \citep{tan2022towards}.
Common approaches include meta learning \citep{fallah2020personalized,chen2018federated}, partial model personalization \citep{collins2021exploiting,pillutla2022federated}, multi-task learning \citep{smith2017federated,li2021ditto}, hypernetworks \citep{shamsian2021personalized}, and clustering \citep{duan2021flexible,ghosh2020efficient}.
A related study on personalized FL and model merging, FedMerge \citep{chen2025fedmerge}, proposed a multi-model FL framework in which the server maintains a pool of global models, and each client obtains a personalized model by merging these server-side models.
However, the server stores multiple global models and perform gradient-based updates for each model in the pool, both the storage and computation overhead on the server scale linearly with the number of global models. 
In contrast, our method follows a single-model paradigm where the server maintains only one global model and the server only needs to average the local models, making it substantially more resource efficient for large model fine-tuning.
Moreover, the above methods primarily address statistical heterogeneity (e.g., label distribution skew) and are typically evaluated on conventional small model architectures.
In contrast, we study a more realistic and challenging regime for large model adaptation, where heterogeneity manifests not only in label diversity but also in task diversity across clients.

\textbf{Federated Fine-Tuning.} FedIT~\citep{zhang2024towards} integrated LoRA with FedAvg and is one of the earliest efforts to combine parameter-efficient adaptation with federated optimization.
Subsequent works studied federated LoRA under differential privacy \citep{sun2024improving}, the asymmetric roles of LoRA matrices \citep{guo2024selective}, and heterogeneous-rank LoRA matrices \citep{fang2025federated,cho2024heterogeneous,bai2024federated,wang2024flora}.
\citet{hao2025personalized,qi2024fdlora} studied personalization under label-distribution heterogeneity while still assuming task-homogeneous adaptation.
Considering task heterogeneity, \citet{yang2024dual,bian2025fedalt} introduced global and local LoRA modules and combine them via a dynamic mixer.
Although empirically effective, such designs lack theoretical justification for determining the appropriate mixing weights between these two LoRA modules.
In contrast, our method derives the optimal mixing weight from a principled theoretical analysis.

\begin{algorithm}[!b]
\caption{FedIT}
\label{alg:FedIT}
\begin{algorithmic}[1]

\STATE \textbf{Initialize}: Per-trained model $\mathbf{W}_{\mathrm{Pre}}$, initialized LoRA modules $\{\mathbf{B}^0,\mathbf{A}^0\}$; 

\FOR{communication round $t = 0,1,\dots,T-1$}

    \STATE Server broadcasts the current global LoRA modules $\{\mathbf{B}^t,\mathbf{A}^t\}$ to all clients.

    \FOR{each client $i = 1,\dots, N$ in parallel}

        \STATE
        Client $i$ keeps pre-trained model $\mathbf{W}_{\mathrm{Pre}}$ frozen and updates only LoRA modules to obtain $\{\mathbf{B}^{(i),t},\mathbf{A}^{(i),t}\}$.
        \STATE
        Client $i$ uploads its local LoRA modules $\{\mathbf{B}^{(i),t},\mathbf{A}^{(i),t}\}$ to the server.

    \ENDFOR

    \STATE Server updates the global LoRA modules via
    \[
        \mathbf{B}^{(i),t+1} = \frac{1}{N} \sum_{i=1}^N \mathbf{B}^{(i),t},
        \mathbf{A}^{(i),t+1} = \frac{1}{N} \sum_{i=1}^N \mathbf{A}^{(i),t}.
    \]
\ENDFOR

\end{algorithmic}
\end{algorithm}

\vspace{-2mm}
\section{Preliminaries and Notation}

In this section, we introduce the preliminary concepts and notation used throughout the paper, including LoRA, federated fine-tuning, and linear mode connectivity.
\vspace{-2mm}
\subsection{Parameter-Efficient Fine-Tuning: LoRA}
Consider a pre-trained model with parameters $\mathbf{W}_{\mathrm{Pre}} \in \mathbb{R}^{m \times n}$.
We denote
$\Delta \mathbf{W} \in \mathbb{R}^{m \times n}$ as the trainable update matrix during fine-tuning.
Instead of updating all parameters in $\Delta \mathbf{W}$, LoRA \citep{hu2022lora} decomposes 
$\Delta \mathbf{W}$ into two low-rank matrices 
$\mathbf{A} \in \mathbb{R}^{r \times n}$ and $\mathbf{B} \in \mathbb{R}^{m \times r}$, 
where $r \ll \min(m, n)$. This decomposition allows the fine-tuning to focus on the 
significantly smaller low-rank matrices $\mathbf{A}$ and $\mathbf{B}$ instead of the full 
matrix $\Delta \mathbf{W}$. 
Consequently, the total number of trainable parameters is reduced 
from $m \times n$ to $r \times (m + n)$. The model parameters after fine-tuning are given by:
{\setlength{\abovedisplayskip}{3.5pt}
 \setlength{\belowdisplayskip}{3.5pt}
\begin{align}
    \mathbf{W}_{\mathrm{FT}} = \mathbf{W}_{\mathrm{Pre}} + \Delta \mathbf{W} 
    = \mathbf{W}_{\mathrm{Pre}} + \mathbf{B}\mathbf{A}, 
\end{align}}%
where $\mathbf{A}$ is typically initialized with random Gaussian values and $\mathbf{B}$ is initialized to zero \citep{hu2022lora,hayou2024lora}.

\vspace{-3mm}
\subsection{Federated LoRA Fine-Tuning}

FedIT \citep{zhang2024towards} integrates LoRA \citep{hu2022lora} with the FedAvg \citep{mcmahan2017communication} framework.
The procedure of FedIT is summarized in Algorithm \ref{alg:FedIT}.
FedIT aims to minimize a global loss function subject to the consensus constraints ${\bf B}^{(1)} = \ldots = {\bf B}^{(N)}$ and ${\bf A}^{(1)} = \ldots = {\bf A}^{(N)}$ where ${\bf B}^{(i)}$ and ${\bf A}^{(i)}$ denotes the LoRA matrices at client $i$.
Consequently, it ultimately yields a single model shared across all clients.

\vspace{-3mm}
\subsection{Linear Mode Connectivity}

Prior works \citep{draxler2018essentially,garipov2018loss} show that the minima found by two independently trained neural networks can be connected by a simple path along which the loss or accuracy remains nearly constant.
This phenomenon is referred to as mode connectivity.
Subsequent studies \citep{nagarajan2019uniform,frankle2020linear} further show that when neural networks are trained from the same initialization, the resulting well-trained models can be connected by a linear path, known as linear mode connectivity (LMC).
We restate the definition of LMC used in \citep{frankle2020linear,zhou2023going}.
\begin{definition}\citep{frankle2020linear,zhou2023going}\label{def:lmc}
Given a dataset $\mathcal{D}$ and two well-trained models $\mathbf{W}_1$ and $\mathbf{W}_2$ such that
$ f(\mathbf{W}_1;\mathcal{D}) \approx f(\mathbf{W}_2;\mathcal{D}) $,
we say that $\mathbf{W}_1$ and $\mathbf{W}_2$  are linearly connected if they satisfy
\begin{align}
    &f(\lambda_1\mathbf{W}_1+\lambda_2\mathbf{W}_2 ;\mathcal{D}) \approx
    f(\mathbf{W}_1;\mathcal{D}), \notag
    \\&\quad\quad\quad\quad\quad\quad\quad \,\forall\,\lambda_1,\lambda_2\ge 0,\,\lambda_1 + \lambda_2=1,
\end{align}
where $f(\mathbf{W};\mathcal{D})$ denotes the loss of  model $\mathbf{W}$ on dataset $\mathcal{D}$.
\end{definition}

In Definition \ref{def:lmc}, two models $\mathbf{W}_1$ and $\mathbf{W}_2$ are linearly connected if the loss of the interpolated mode remains nearly unchanged.
In other words, $\mathbf{W}_1$ and $\mathbf{W}_2$ lie within the same flat basin of the loss landscape, and their linear interpolation remains inside this basin \citep{neyshabur2020being}.

\begin{figure}[t]
    \centering
    \includegraphics[width=0.99\linewidth]{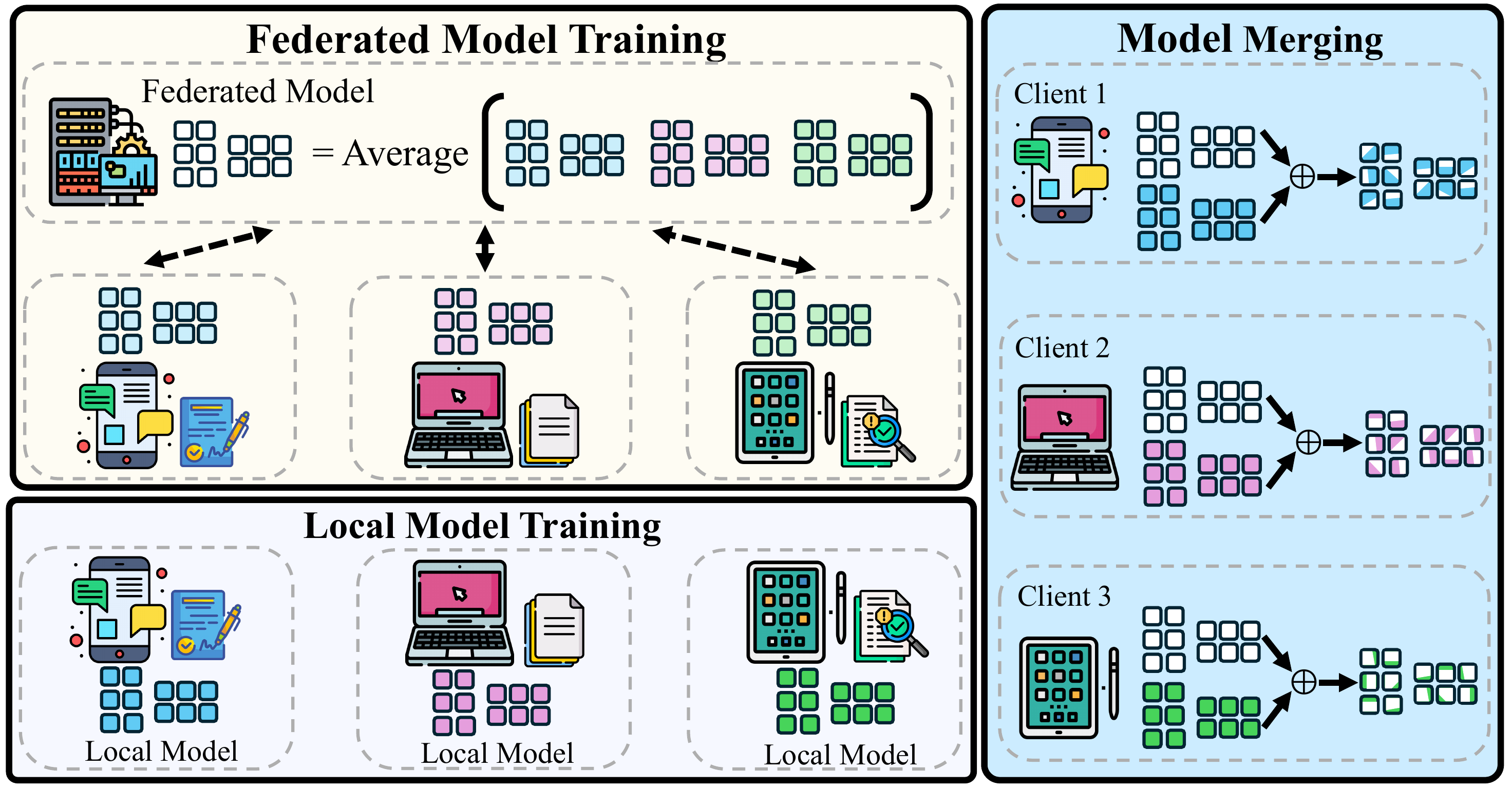}
    \caption{Illustration of \textsc{Potara}. 
    We train a FedIT model via FL and train local fine-tuning models. 
    We obtain the final personalized model by weighted merging these FedIT and local models.}
\end{figure}

\section{Our Method: \textsc{Potara}}

In this section, we propose \textsc{Potara}, which achieves a principled trade-off between federation and personalization while simultaneously balancing communication efficiency and performance.

\subsection{Motivation}

In large model fine-tuning, different clients usually hold data from distinct tasks (e.g., question classification versus sentiment analysis)~\citep{fallah2020personalized,collins2021exploiting,bian2025fedalt,yang2024dual,tian2023distributed}. 
In such task-heterogeneous settings, consensus-based FL methods often underperform, since enforcing a single shared solution neglects both task diversity and the need for client-specific adaptation. 
A trivial alternative is to let each client fine-tune independently without communication. 
However, the number of local data samples is typically insufficient to train a satisfactory model in practical settings. 
By exchanging information across clients, each client can exploit transferable knowledge learned elsewhere to improve its own model. 
Accordingly, personalized federated fine-tuning aims to learn a client-specific model that effectively combines \emph{general knowledge} across clients with \emph{personalized knowledge} adapted to the client’s local task.

\subsection{Federated-Local Model Merging}

Model merging provides a lightweight mechanism for combining complementary capabilities from multiple fine-tuned models without additional joint training.
Building on this idea, we propose \textsc{Potara}, which constructs a personalized model for each client by merging two complementary models:
\begin{itemize}[noitemsep]
    \item a collaborative FedIT model learned through cross-client communication,
    \item a purely local model fine-tuned only on that client\textquotesingle s data.
\end{itemize}
Intuitively, the FedIT model captures \emph{general knowledge} distilled from all clients.
In contrast, the local model emphasizes \emph{personalized knowledge} tailored to the client’s task and distribution.
By merging them, \textsc{Potara} yields a client-specific model that simultaneously leverages federation and personalization, offering an effective trade-off between these two objectives.

Specifically, we denote the task vector \citep{ilharco2023editing} of client $i$'s locally fine-tuned model as
$\bm{\theta}^{(i)}_{\mathrm{Local}}= \mathbf{B}^{(i)}_{\mathrm{Local}} \mathbf{A}^{(i)}_{\mathrm{Local}}$,
and the task vector of the FedIT model as
$\bm{\theta}_{\mathrm{FedIT}}= \mathbf{B}_{\mathrm{FedIT}} \mathbf{A}_{\mathrm{FedIT}}$.
For each client $i$, we form the merged task vector by a convex combination:
\begin{align}
\bm{\theta}^{(i)}_{\mathrm{Merge}} 
&= \lambda^{(i)}_{\mathrm{FedIT}}\,\bm{\theta}_{\mathrm{FedIT}} + \lambda^{(i)}_{\mathrm{Local}}\,\bm{\theta}^{(i)}_{\mathrm{Local}},
\end{align}
where $\lambda^{(i)}_{\mathrm{FedIT}},\lambda^{(i)}_{\mathrm{Local}} \geq 0$ denote the mixing weights for client $i$ and $\lambda^{(i)}_{\mathrm{FedIT}}+\lambda^{(i)}_{\mathrm{Local}}=1$.
Then, the final model for each client is given by
\begin{align}
    \mathbf{W}_{\mathrm{Final}}^{(i)} = \mathbf{W}_{\mathrm{Pre}} + \bm{\theta}^{(i)}_{\mathrm{Merge}}.
\end{align}

A natural question then arises: how to choose the mixing weights
$\lambda^{(i)}_{\mathrm{FedIT}}$ and $\lambda^{(i)}_{\mathrm{Local}}$?
This choice is critical, as it governs the trade-off between leveraging \emph{general knowledge} from FedIT and preserving \emph{personalized knowledge} from local fine-tuning.
Next, we present our weighting strategy for determining $\lambda^{(i)}_{\mathrm{FedIT}}$ and $\lambda^{(i)}_{\mathrm{Local}}$ in a principled manner.

\subsection{Determination of Mixing Weights} \label{mixing_weight}

In the preliminaries, we recalled the standard definition of LMC, which characterizes the phenomenon in terms of the loss remaining nearly constant along the linear interpolation path.
To obtain a local geometric characterization of LMC, we introduce the notion of LMC basin below.

\begin{definition}\label{LMC-basin}
For a given radius $\delta >0$, define the LMC basin around the optimum $\mathbf{W}^*$ as
\begin{align}
    \mathcal U_\delta 
    =
    \big\{ \mathbf{W}:\ 
    \| \mathbf{W} - \mathbf{W}^* \| \le \delta \big\}.
\end{align}
Two models $\mathbf{W}_1$ and $\mathbf{W}_2$ exhibit linear mode connectivity in $\mathcal U_\delta$
if all their convex combination lies entirely in $\mathcal{U}_\delta$, i.e.,
\begin{align}
    \lambda_1 \mathbf{W}_1 + \lambda_2 \mathbf{W}_2 \in \mathcal{U}_\delta,
    \quad  \forall\,\lambda_1,\lambda_2\ge 0,\,\lambda_1 + \lambda_2=1.
\end{align}
\end{definition}

We assume that the FedIT and each local model exhibit LMC within the LMC basin, i.e., the entire linear interpolation path between them remains inside the basin.
\begin{assumption}\label{assump:basin_event}
Let $\mathcal U_\delta$ be the LMC basin in Definition~\ref{LMC-basin}. For each client $i$, we assume that
$\mathbf{W}_{\mathrm{Pre}} 
+ \lambda^{(i)}_{\mathrm{FedIT}}\,\bm{\theta}_{\mathrm{FedIT}}
+ \lambda^{(i)}_{\mathrm{Local}}\,\bm{\theta}^{(i)}_{\mathrm{Local}}
\;\in\; \mathcal U_\delta,
\quad \forall\,\lambda^{(i)}_{\mathrm{FedIT}},\lambda^{(i)}_{\mathrm{Local}}\ge 0,\ 
\lambda^{(i)}_{\mathrm{FedIT}}+\lambda^{(i)}_{\mathrm{Local}}=1.$
\end{assumption}

Throughout our analysis, we focus on the behavior of the loss landscape within the LMC basin and make the following  standard assumptions on optimization.

\begin{assumption}\label{smoothness}
The task loss function $\mathcal{L}$ is locally Lipschitz smooth within the LMC basin $\mathcal U_\delta$.
There exist constants \( L > 0\) such that
\begin{align}
     \mathcal{L}(\mathbf{y}) \leq \mathcal{L}(\mathbf{x})
    + \nabla \mathcal{L}(\mathbf{x})^\top (\mathbf{y} - \mathbf{x}) + \frac{L}{2} \| \mathbf{y} - \mathbf{x} \|^2,\notag \\ \forall\,\mathbf{x},\mathbf{y}\in\mathcal{U}_\delta.
\end{align}
Equivalently, the Hessian of $\mathcal{L}$ is bounded  within the LMC basin, i.e., $\nabla^2 \mathcal{L}(\mathbf{x})
    \preceq
    L\,\mathbf{I},
    \, \forall\, \mathbf{x}\in \mathcal U_\delta$.
\end{assumption}

\begin{assumption}\cite{nesterov2006cubic,carmon2018accelerated}\label{assump:L_H}
The Hessian of the task loss within the LMC basin is $L_H$-Lipschitz continuous if $
        \| \nabla^2 \mathcal{L}(\mathbf{y}) -  \nabla^2 \mathcal{L}(\mathbf{x})  \| \leq L_H \| \mathbf{y} - \mathbf{x} \|, \forall\,\mathbf{x},\mathbf{y}\in\mathcal{U}_\delta$, where $ L_H>0$ is a constant.
\end{assumption}

In the following, we show that the model obtained by merging the FedIT model and the local model yields a personalized performance gain.
For clarity, we omit the client index.

We use the Laplace approximation to approximate the task vectors of FedIT (i.e., $\bm{\theta}_{\mathrm{FedIT}}$) and local fine-tuning (i.e., $\bm{\theta}_{\mathrm{Local}}$) as Gaussian distributions:
\begin{align}
    \bm{\theta}_{\mathrm{FedIT}} &\sim \mathcal{N}(\bm{\mu}_{\mathrm{FedIT}},\bm{\Sigma}_{\mathrm{FedIT}}), 
    \\
    \bm{\theta}_{\mathrm{Local}} &\sim \mathcal{N}(\bm{\mu}_{\mathrm{Local}},\bm{\Sigma}_{\mathrm{Local}}),
\end{align}
where $\bm{\mu}_{\mathrm{FedIT}}$ and $\bm{\mu}_{\mathrm{Local}}$ denote the MAP parameter  estimates of the corresponding task vectors, and
$\bm{\Sigma}_{\mathrm{FedIT}}=\mathbf{F}_{\bm{\mu}_\mathrm{FedIT}}^{-1}$ and $\bm{\Sigma}_{\mathrm{Local}}=\mathbf{F}_{\bm{\mu}_\mathrm{Local}}^{-1}$ are given by the inverses of their Fisher information matrices.
The full derivation is provided in Appendix \ref{appendix:A1}.

We further assume a joint Gaussian distribution of FedIT and local task vectors, which introduces an explicit cross-covariance term.

\begin{assumption}\label{joint_Gaussian} \cite{wang2025more}
    We assume that $(\bm{\theta}_{\mathrm{FedIT}}, \bm{\theta}_{\mathrm{Local}})$ follow a joint Gaussian distribution:
\begin{align}
         \begin{bmatrix}
         \bm{\theta}_{\mathrm{FedIT}}
         \\ \bm{\theta}_{\mathrm{Local}}
        \end{bmatrix} 
        \sim \mathcal{N} \bigg( \begin{bmatrix}
         \bm{\mu}_{\mathrm{FedIT}}
         \\ \bm{\mu}_{\mathrm{Local}}
        \end{bmatrix},
        \begin{bmatrix}
            \bm{\Sigma}_{\mathrm{FedIT}} &\bm{\Sigma}_{\mathrm{Cross}}
           \\  \bm{\Sigma}_{\mathrm{Cross}}^\top & \bm{\Sigma}_{\text{Local}}
        \end{bmatrix} \bigg),
\end{align}
where $\bm{\Sigma}_{\mathrm{Cross}}$ denotes the cross-covariance matrix between the two task vectors.
We assume that FedIT task and local task are correlated, therefore, the cross-covariance matrix satisfies $\bm{\Sigma}_{\mathrm{Cross}} \succeq 0$.
Furthermore, we assume that $\bm{\Sigma}_{\mathrm{Cross}} \preceq \bm{\Sigma}_{\mathrm{FedIT}}$ and $\bm{\Sigma}_{\mathrm{Cross}} \preceq \bm{\Sigma}_{\mathrm{Local}}$.
\end{assumption}

The conditions  $\bm{\Sigma}_{\mathrm{Cross}} \preceq \bm{\Sigma}_{\mathrm{FedIT}}$ and $\bm{\Sigma}_{\mathrm{Cross}} \preceq \bm{\Sigma}_{\mathrm{Local}}$ indicate that the cross-covariance is not larger than the marginal variances, i.e., the shared uncertainty does not dominate each model's own uncertainty.

With the joint Gaussian model in Assumption \ref{joint_Gaussian}, we can derive the Gaussian distribution of the merged task vector:
\begin{align}
    \bm{\theta}_{\mathrm{Merge}} \sim \mathcal{N}(\bm{\mu}_{\mathrm{Merge}}, \bm{\Sigma}_{\mathrm{Merge}}),
\end{align}
where
\begin{align}
    \bm{\mu}_{\mathrm{Merge}} &= \lambda_\mathrm{FedIT} \bm{\mu}_\mathrm{FedIT} + \lambda_\mathrm{Local} \bm{\mu}_\mathrm{Local}, \\
    \bm{\Sigma}_{\mathrm{Merge}} 
    &= \lambda_\mathrm{FedIT}^2 \bm{\Sigma}_\mathrm{FedIT} 
      + \lambda_\mathrm{Local}^2 \bm{\Sigma}_\mathrm{Local} \notag
      \\& \qquad + \lambda_\mathrm{FedIT} \lambda_\mathrm{Local} (\bm{\Sigma}_{\mathrm{Cross}} + \bm{\Sigma}_{\mathrm{Cross}}^{\!\top}).
\end{align}

We now present an expected excess loss bound for FedIT, Local, and their merged model.
\begin{theorem}\label{loss_bound}
Under Definition~\ref{LMC-basin}, Assumption~\ref{assump:basin_event}, \ref{smoothness},  \ref{assump:L_H}, and \ref{joint_Gaussian}, for $i\in\{\mathrm{FedIT},\mathrm{Local},\mathrm{Merge}\}$, it holds that 
\begin{align}
    &\mathbb{E}\!\left[\mathcal{L}(\mathbf{W}_{\mathrm{Pre}}+\bm{\theta}_i)-\mathcal{L}(\mathbf{W}^*)\right]
    \le
    \bigg(\frac{L}{2} + \frac{L_H}{6}\delta \bigg)\delta^2 \notag
    \\& \qquad\qquad\qquad\qquad + \bigg( \frac{L}{2}
    + \frac{L_H}{6}\delta \bigg) \mathrm{tr}(\bm{\Sigma}_{i}).
\end{align}
\end{theorem}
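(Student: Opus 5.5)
We would prove the bound by a third-order Taylor expansion of $\mathcal{L}$ around the optimum $\mathbf{W}^*$, using the Hessian-Lipschitz remainder, and then take expectations under the Gaussian law of $\bm{\theta}_i$.

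Write $\mathbf{W}_i = \mathbf{W}_{\mathrm{Pre}}+\bm{\theta}_i$ and $\mathbf{e}_i = \mathbf{W}_i - \mathbf{W}^*$. By Assumption~\ref{assump:basin_event}, and taking $\lambda$ at the endpoints or at the chosen weights for $i=\mathrm{Merge}$, we have $\mathbf{W}_i\in\mathcal U_\delta$, so $\|\mathbf{e}_i\|\le\delta$. Since $\mathcal U_\delta$ is a ball, it is convex, and the segment from $\mathbf{W}^*$ to $\mathbf{W}_i$ lies inside it. Assumptions~\ref{smoothness} and \ref{assump:L_H} therefore apply along this segment.

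Because $\mathbf{W}^*$ is the optimum, $\nabla\mathcal{L}(\mathbf{W}^*)=0$. The standard cubic-regularization estimate under Assumption~\ref{assump:L_H} (as in \citet{nesterov2006cubic}) then gives
\begin{align}
\mathcal{L}(\mathbf{W}_i)-\mathcal{L}(\mathbf{W}^*) \le \tfrac12 \mathbf{e}_i^\top \nabla^2\mathcal{L}(\mathbf{W}^*)\mathbf{e}_i + \tfrac{L_H}{6}\|\mathbf{e}_i\|^3 .
\end{align}
We bound the quadratic term by $\tfrac{L}{2}\|\mathbf{e}_i\|^2$ using $\nabla^2\mathcal{L}\preceq L\mathbf{I}$. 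For the cubic term we use $\|\mathbf{e}_i\|^3\le \delta\|\mathbf{e}_i\|^2$. Together these give the pointwise bound
\begin{align}
\mathcal{L}(\mathbf{W}_i)-\mathcal{L}(\mathbf{W}^*) \le \Big(\tfrac L2+\tfrac{L_H}{6}\delta\Big)\|\mathbf{e}_i\|^2 .
\end{align}

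Next we take expectations over $\bm{\theta}_i\sim\mathcal N(\bm{\mu}_i,\bm{\Sigma}_i)$. For $i=\mathrm{Merge}$ this law is the Gaussian derived above from Assumption~\ref{joint_Gaussian}. The bias–variance decomposition gives
\begin{align}
\mathbb{E}\|\mathbf{e}_i\|^2 = \|\mathbf{W}_{\mathrm{Pre}}+\bm{\mu}_i-\mathbf{W}^*\|^2 + \mathrm{tr}(\bm{\Sigma}_i).
\end{align}
The mean model $\mathbf{W}_{\mathrm{Pre}}+\bm{\mu}_i$, the MAP estimate, is itself a point of the form covered by Assumption~\ref{assump:basin_event}. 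Hence its squared distance to $\mathbf{W}^*$ is at most $\delta^2$. Substituting this yields exactly $\big(\frac L2+\frac{L_H}{6}\delta\big)\delta^2 + \big(\frac L2+\frac{L_H}{6}\delta\big)\mathrm{tr}(\bm{\Sigma}_i)$.

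The main obstacle is the tension between two uses of $\delta$. The pointwise step $\|\mathbf{e}_i\|\le\delta$ needs every realization of $\bm{\theta}_i$ to lie in the basin, which a Gaussian cannot strictly guarantee. Yet the variance term must survive the expectation rather than being absorbed into $\delta^2$.

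We would handle this by reading Assumption~\ref{assump:basin_event} as holding on the realizations, i.e.\ conditioning on the basin event. We use the almost-sure basin membership only for the cubic factor $\|\mathbf{e}_i\|^3\le\delta\|\mathbf{e}_i\|^2$ and for the applicability of the Taylor bound. We then keep $\|\mathbf{e}_i\|^2$ exact when taking expectations, so that the bias–variance split produces the $\mathrm{tr}(\bm{\Sigma}_i)$ term. The bias is bounded by $\delta^2$ using only the mean point. This yields the stated bound with the common constant $\frac L2+\frac{L_H}{6}\delta$ multiplying both terms.
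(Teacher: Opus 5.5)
Your proposal is correct and follows essentially the same route as the paper's proof: the Hessian-Lipschitz cubic upper bound at $\mathbf{W}^*$, then $\|\bm\phi\|_2^3\le\delta\|\bm\phi\|_2^2$, the bias--variance decomposition of $\mathbb{E}\|\bm\phi\|_2^2$, $\mathbf{H}\preceq L\mathbf{I}$, and Assumption~\ref{assump:basin_event} to bound the bias by $\delta^2$. The only cosmetic difference is that you apply $\mathbf{H}\preceq L\mathbf{I}$ pointwise before taking expectations, whereas the paper first writes $\mathbb{E}[\bm\phi^\top\mathbf{H}\bm\phi]=(\mathbb{E}\bm\phi)^\top\mathbf{H}(\mathbb{E}\bm\phi)+\mathrm{tr}(\mathbf{H}\bm\Sigma_i)$ and bounds $\mathbf{H}$ afterwards; the tension you flag between Gaussian realizations and the bounded basin is genuine and is silently passed over in the paper too, which simply asserts $\|\bm\phi\|_2\le\delta$ for every realization (note that if you literally condition on the basin event, the conditional mean and covariance are no longer $\bm\mu_i$ and $\bm\Sigma_i$, although the right-hand side then bounds the conditional expectation trivially since $\mathbb{E}[\|\mathbf{e}_i\|^2\mid\text{basin}]\le\delta^2$).
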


\paragraph{Proof sketch of Theorem~\ref{loss_bound}.}
To bound the expected excess loss, we expand $\mathcal L$ around the optimum $\mathbf W^*$.
Let $\bm\phi=\mathbf W_{\mathrm{Pre}}+\bm\theta_{\mathrm{Merge}}-\mathbf W^*$ and $\mathbf H=\nabla^2\mathcal L(\mathbf W^*)$.
By Lemma~2.2 in \cite{carmon2018accelerated}, we have 
\begin{align}
    \mathcal L(\mathbf W_{\mathrm{Pre}} \! + \! \bm\theta_{\mathrm{Merge}})
    \!\le\! \mathcal L(\mathbf W^*) \!\!+\!\tfrac12 \bm\phi^\top\mathbf H\bm\phi \! + \! \tfrac{L_H}{6}\|\bm\phi\|_2^3.
\end{align}
By
$\mathbb E[\bm\phi^\top\mathbf H\bm\phi]
\! = \! (\mathbb E\bm\phi)^\top\mathbf H(\mathbb E\bm\phi)+\mathrm{tr}(\mathbf H\,\mathrm{Cov}(\bm\phi))$,
we obtain
\begin{align}
    &\mathbb E[\mathcal L(\mathbf W_{\mathrm{Pre}}+\bm\theta_{\mathrm{Merge}})-\mathcal L(\mathbf W^*)] \notag
    \\&\le \tfrac12(\mathbf W_{\mathrm{Pre}} \! + \! \bm\mu_{\mathrm{Merge}} \! - \! \mathbf W^*)^\top\mathbf H(\mathbf W_{\mathrm{Pre}}+\bm\mu_{\mathrm{Merge}}-\mathbf W^*) \notag
    \\&\quad +\tfrac12\mathrm{tr}(\mathbf H\bm\Sigma_{\mathrm{Merge}})
    +\tfrac{L_H}{6}\mathbb E\|\bm\phi\|_2^3.
\end{align}
Due to $\|\bm\phi\|_2\le\delta$, we have $\|\bm\phi\|_2^3\le \delta\|\bm\phi\|_2^2$ and hence
$\mathbb E\|\bm\phi\|_2^3\le \delta\,\mathbb E\|\bm\phi\|_2^2$.
Moreover, $\mathbb E\|\bm\phi\|_2^2 =\|\mathbb E\bm\phi\|_2^2+\mathrm{tr}(\mathrm{Cov}(\bm\phi))
=\|\mathbf W_{\mathrm{Pre}}+\bm\mu_{\mathrm{Merge}}-\mathbf W^*\|_2^2+\mathrm{tr}(\bm\Sigma_{\mathrm{Merge}})$.
Finally, using $\mathbf H\preceq L\mathbf I$ yields the bound in terms of $\delta^2$ and
$\mathrm{tr}(\bm\Sigma_{\mathrm{Merge}})$.
The same argument applies to
$\bm\theta_{\mathrm{FedIT}}$ and $\bm\theta_{\mathrm{Local}}$ individually.
The full derivation is provided in Appendix~\ref{appendix:loss_bound}.
\hfill$\square$

Theorem~\ref{loss_bound} indicates that the excess loss upper bound is primarily controlled by the trace term.
For the merged model, $\mathrm{tr}(\bm{\Sigma}_{\mathrm{Merge}})$ depends on the mixing weights $(\lambda_{\mathrm{FedIT}}, \lambda_{\mathrm{Local}})$.
In the following, we minimize this trace term to obtain the optimal mixing weights in closed form.

\begin{lemma}\label{corollary}
Minimizing the trace term  $\mathrm{tr}(\bm{\Sigma}_{\mathrm{Merge}})$ yields the optimal mixing weights, given by
\begin{align}
    \lambda_{\mathrm{FedIT}}^{*} = \frac{b - c}{a + b - 2c}, 
    \qquad
    \lambda_{\mathrm{Local}}^{*} = \frac{a - c}{a + b - 2c},
\end{align}
where $
a = \mathrm{tr}(\bm{\Sigma}_{\mathrm{FedIT}}), 
\,
b = \mathrm{tr}(\bm{\Sigma}_{\mathrm{Local}}), 
\,
c = \mathrm{tr}(\bm{\Sigma}_{\mathrm{Cross}}).
$
At this optimum, the merged trace is smaller than those of the two individual models:
\begin{align}
    \mathrm{tr}(\bm{\Sigma}_{\mathrm{Merge}}) 
\leq \min\{\mathrm{tr}(\bm{\Sigma}_{\mathrm{FedIT}}),\, \mathrm{tr}(\bm{\Sigma}_{\mathrm{Local}})\}.
\end{align}
\end{lemma}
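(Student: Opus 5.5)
We will reduce the problem to minimizing a scalar quadratic in one variable. Write $\lambda=\lambda_{\mathrm{FedIT}}$, so that $\lambda_{\mathrm{Local}}=1-\lambda$. Using linearity of the trace and $\mathrm{tr}(\bm{\Sigma}_{\mathrm{Cross}}^\top)=\mathrm{tr}(\bm{\Sigma}_{\mathrm{Cross}})$, the merged covariance formula gives
\begin{align}
g(\lambda):=\mathrm{tr}(\bm{\Sigma}_{\mathrm{Merge}})=\lambda^2 a+(1-\lambda)^2 b+2\lambda(1-\lambda)c .
\end{align}
Expanding, $g(\lambda)=(a+b-2c)\lambda^2-2(b-c)\lambda+b$.

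The first step is to establish convexity and the sign facts from Assumption~\ref{joint_Gaussian}. Since $\bm{\Sigma}_{\mathrm{Cross}}\preceq\bm{\Sigma}_{\mathrm{FedIT}}$ and $\bm{\Sigma}_{\mathrm{Cross}}\preceq\bm{\Sigma}_{\mathrm{Local}}$, taking traces of positive semidefinite differences gives $a-c\ge 0$ and $b-c\ge 0$. Hence $a+b-2c\ge 0$, so $g$ is convex.

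In the nondegenerate case $a+b-2c>0$, the stationarity condition $g'(\lambda)=0$ gives $\lambda^*=(b-c)/(a+b-2c)$ and $1-\lambda^*=(a-c)/(a+b-2c)$. Both are nonnegative by the sign facts, so the unconstrained minimizer lies in $[0,1]$. It is therefore also the minimizer over the simplex, which yields the stated weights. The degenerate case $a=b=c$ makes $g$ constant, so every feasible weight is optimal. I would remark on this case, or interpret the formula by continuity.

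The second step is the comparison. Since $\lambda=1$ and $\lambda=0$ are feasible, minimality gives $g(\lambda^*)\le g(1)=a$ and $g(\lambda^*)\le g(0)=b$, which proves the inequality. For an explicit form one can substitute: $g(\lambda^*)=b-(b-c)^2/(a+b-2c)=(ab-c^2)/(a+b-2c)$, and then check directly that $a-g(\lambda^*)=(a-c)^2/(a+b-2c)\ge 0$.

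There is no real obstacle. The only point needing care is the feasibility of $\lambda^*$, that is, nonnegativity of the weights. This relies on $c\le\min\{a,b\}$, which follows from the Loewner-order assumptions via monotonicity of the trace.
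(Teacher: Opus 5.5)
Your proof is correct. It has the same skeleton as the paper's: reduce to the scalar quadratic $g(\lambda)$, take the stationary point, compare with the endpoints. Where it differs is in how convexity is justified.

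The paper first proves the auxiliary bound $|c|\le\sqrt{ab}$ (Lemma~\ref{lemma2}, via a contraction factorization of the off-diagonal block of the joint covariance). This gives $a+b-2c\ge(\sqrt a-\sqrt b)^2\ge 0$. Only later, for the comparison step, does it derive $c\le\min\{a,b\}$ from the Loewner assumptions. You use $c\le\min\{a,b\}$ for everything: $a+b-2c=(a-c)+(b-c)\ge 0$ gives convexity, and the same two inequalities give $\lambda^*\in[0,1]$.

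The paper's route shows that convexity holds for any valid joint covariance, even without $\bm{\Sigma}_{\mathrm{Cross}}\preceq\bm{\Sigma}_{\mathrm{FedIT}},\bm{\Sigma}_{\mathrm{Local}}$. Those assumptions are needed anyway to keep the minimizer feasible, though. So for the lemma as stated, your argument is shorter and needs no auxiliary lemma.

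Your version is also more careful on three points. The paper asserts $\lambda^*\in[0,1]$ before it has established $c\le\min\{a,b\}$. It never treats the case $a+b-2c=0$, which, as you note, forces $a=b=c$ and makes $g$ constant. And it credits the final comparison to $c\le\min\{a,b\}$, even though its closed forms $a-(a-c)^2/(a+b-2c)$ and $b-(b-c)^2/(a+b-2c)$ are bounded by $a$ and $b$ whenever $a+b-2c>0$. Your endpoint argument makes explicit that $c\le\min\{a,b\}$ is what guarantees feasibility of the minimizer.
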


At the optimal weights $(\lambda^*_{\mathrm{FedIT}}, \lambda^*_{\mathrm{Local}})$ in Lemma~\ref{corollary}, $\mathrm{tr}(\bm{\Sigma}_{\mathrm{Merge}})$ attains its minimum.
Moreover, this minimum satisfies $ \mathrm{tr}(\bm{\Sigma}_{\mathrm{Merge}}) 
\leq \min\{\mathrm{tr}(\bm{\Sigma}_{\mathrm{FedIT}}),\, \mathrm{tr}(\bm{\Sigma}_{\mathrm{Local}})\}$.
Therefore, by Theorem~\ref{loss_bound} and Lemma~\ref{corollary}, the merged model admits an excess loss upper bound that is tighter than the bounds for FedIT or local fine-tuning alone.

\vspace{-3mm}
\subsection{Estimation of Fisher Information Matrix}

In Section \ref{mixing_weight}, we established that $\bm{\Sigma}_{\mathrm{FedIT}}=\mathbf{F}_{\bm{\mu}_\mathrm{FedIT}}^{-1}$ and $\bm{\Sigma}_{\mathrm{Local}}=\mathbf{F}_{\bm{\mu}_\mathrm{Local}}^{-1}$, i.e., each posterior variance is given by the inverse Fisher information matrix.
To reduce the computation cost, we adopt a diagonal approximation
of the Fisher information matrix \citep{matena2022merging}. 
The diagonal Fisher matrix evaluated at the posterior mean $\bm{\mu}_i$ is given by
\begin{align}
\widehat{\mathbf{F}}_{\bm{\mu}_i}
\!\!&=\!
\text{diag}\!\bigg(\!
\frac{1}{N}\!\!\sum_{n=1}^{N}\!
\mathbb{E}_{\bm{y} \sim p_{\bm{\mu}_i}(\bm{y} \mid \bm{x}_n)}\!\!
\left[\!
\big(
\nabla_{\bm{\mu}_i}\! \log p_{\bm{\mu}_i}(\bm{y}\! \!\mid\!\! \bm{x}_n)
\big)^{\odot 2}
\right]\!\!\bigg),
\label{eq:diag_fisher}
\end{align}
where $N$ is the mini-batch size to estimate the Fisher information and $(\cdot)^{\odot 2}$ denotes the element-wise square operator.
Consequently, the posterior variance can be approximated as $\bm{\Sigma}_i \approx (\widehat{\mathbf{F}}_{\bm{\mu}_i})^{{-1}}$.

To approximate the cross-covariance between the two task vectors, we compute per-sample
gradients under $\bm{\mu}_{\mathrm{FedIT}}$ and $\bm{\mu}_{\mathrm{Local}}$, estimate their coordinate-wise variances and covariance,
and obtain a gradient correlation coefficient $\rho_k$ for each parameter dimension.
We then construct a diagonal approximation
{\setlength{\abovedisplayskip}{3.5pt}
 \setlength{\belowdisplayskip}{3.5pt}
\begin{align}
    (\bm{\Sigma}_{\mathrm{Cross}})_{kk} \;\approx\; \rho_k \sqrt{(\bm{\Sigma}_{\mathrm{FedIT}})_{kk}(\bm{\Sigma}_{\mathrm{Local}})_{kk}},
\end{align}}%
where $k$ denotes the parameter coordinate and $\rho_k$ is clipped to ensure $\bm{\Sigma}_{\mathrm{Cross}}\succeq 0$ and
$\bm{\Sigma}_{\mathrm{Cross}} \preceq \bm{\Sigma}_{\mathrm{FedIT}}, \bm{\Sigma}_{\mathrm{Local}}$.
Further details on the estimation procedure and computation times are provided in the Appendix \ref{Implementation} and \ref{fisher_time}.

\subsection{Communication Overhead Comparison}

We denote $T$ as the number of communication rounds.
FedIT \citep{zhang2024towards} uploads both $\mathbf{A}$ and $\mathbf{B}$ each round, yielding per-client overhead $\mathcal{O}(r(m+n)T)$.
 FedSA \citep{guo2024selective} uploads only $\mathbf{A}$, yielding $\mathcal{O}(rnT)$.
FFA-LoRA \citep{sun2024improving} uploads only $\mathbf{B}$, yielding $\mathcal{O}(rmT)$.
FedDPA \citep{yang2024dual} uploads one of two locally trained LoRA modules per round, yielding $\mathcal{O}(r(m+n)T)$.
FedALT \citep{bian2025fedalt} uploads the local LoRA module each round, yielding $\mathcal{O}(r(m+n)T)$.
For \textsc{Potara}, clients upload both $\mathbf{A}$ and $\mathbf{B}$ only during the FedIT phase, so the overhead is $\mathcal{O}(r(m+n)T_{\text{FedIT}})$, where $T_{\text{FedIT}}$ denotes
the number of communication rounds required to train the FedIT model.
The communication overhead comparison is summarized in Table \ref{comm_comparison}.
{\textsc{Potara}} achieves consistent performance gains even under a reduced communication budget.
A detailed performance-communication trade-off is reported in Section \ref{acc_com_trade_off}.

\begin{table}[h]
\vspace{-2mm}
\centering
\caption{Communication Overhead Comparison.}
\label{comm_comparison}
\begin{tabular}{l|c|c}
\hline
\textbf{Method} 
    & \textbf{Communication Overhead} \\
\hline
FedIT 
     & $\mathcal{O}(r(m+n)T)$\\

FedSA 
     & $\mathcal{O}(rnT)$ \\

FFA-LoRA 
    & $\mathcal{O}(rmT)$\\

FedALT 
    & $\mathcal{O}(r(m+n)T)$ \\

FedDPA 
    & $\mathcal{O}(r(m+n)T)$ \\

\textsc{Potara}
    & $\mathcal{O}(r(m+n)T_{\text{FedIT}})$\\
\hline
\end{tabular}
\vspace{-2mm}
\end{table}

\section{Experiments}

In this section, we evaluate and compare the performance of the proposed \textsc{Potara} against baseline methods on vision and language benchmarks.
The baseline methods
include Local fine-tuning \citep{hu2022lora}, FedIT \citep{zhang2024towards}, FFA-LoRA \citep{sun2024improving},  FedSA \citep{guo2024selective}, FedDPA \citep{yang2024dual}, FedALT \citep{bian2025fedalt}, and Fisher Merging \citep{matena2022merging}.

\begin{table*}[t]
\centering
\scriptsize

\captionof{table}{Test accuracy on personalized CIFAR-100 benchmark. \textsc{Potara} obtains consistent improvement over the baselines across all clients.}
\label{table_cifar100}
\resizebox{\textwidth}{!}{
\begin{tabular}{l|cccccccccc|c}
\multicolumn{12}{c}{\textbf{Personalized CIFAR-100 benchmark (ViT-B/16 model)}} \\
\toprule
\textbf{Methods} & \textbf{Client 1} & \textbf{Client 2} & \textbf{Client 3} & \textbf{Client 4} & \textbf{Client 5} & \textbf{Client 6} & \textbf{Client 7} & \textbf{Client 8} & \textbf{Client 9} & \textbf{Client 10} & \textbf{Avg.} \\
\midrule
Local & 60.01 {\com 0.48} & 59.78 {\com 0.94} & 63.26 {\com 2.68} & 61.79 {\com 2.06} & 61.26 {\com 1.07} & 56.46 {\com 4.77} & 60.62 {\com 1.72} & 61.79 {\com 1.75} & 58.75 {\com 2.82} & 60.53 {\com 4.29} & 60.43 {\com 0.93} \\
FFA-LoRA & 39.96 {\com 3.87} & 44.55 {\com 2.91} & 42.36 {\com 6.53} & 40.14 {\com 2.98} & 42.85 {\com 1.62} & 38.62 {\com 0.80} & 41.50 {\com 3.63} & 40.03 {\com 2.37} & 44.07 {\com 1.86} & 41.08 {\com 6.12} & 41.52 {\com 2.44} \\
FedSA  & 62.24 {\com 0.44} & 63.11 {\com 1.41} & 61.79 {\com 5.53} & 61.16 {\com 2.72} & 59.44 {\com 0.99} & 59.17 {\com 1.94} & 61.11 {\com 1.54} & 59.73 {\com 0.45} & 59.71 {\com 2.02} & 61.24 {\com 4.27} & 60.87 {\com 0.97} \\
FedIT  & 62.55 {\com 2.33} & 60.53 {\com 2.37} & 60.30 {\com 4.86} & 59.48 {\com 3.38} & 62.07 {\com 3.63} & 59.99 {\com 1.51} & 61.93 {\com 3.14} & 62.73 {\com 1.52} & 63.60 {\com 3.15} & 59.61 {\com 3.35} & 61.28 {\com 0.29} \\
FedDPA  & 62.80 {\com 1.46} & 60.51 {\com 2.79} & 61.22 {\com 4.81} & 62.41 {\com 2.87} & 61.54 {\com 2.43} & 60.31 {\com 1.71} & 63.24 {\com 1.94} & 62.09 {\com 2.13} & 62.74 {\com 1.82} & 61.14 {\com 4.61} & 61.80 {\com 0.96} \\
Fish Merging  & 62.19 {\com 2.68} & 62.08 {\com 1.94} & 65.29 {\com 3.14} & 61.65 {\com 2.59} & 62.54 {\com 1.40} & 60.89 {\com 1.62} & 61.53 {\com 1.97} & 60.85 {\com 1.09} & 60.40 {\com 1.90} & 59.78 {\com 3.65} & 61.72 {\com 0.31} \\
\rowcolor{ours} \textsc{Potara} & 66.32 {\com 1.11} & 66.55 {\com 0.94} & 67.26 {\com 4.70} & 63.43 {\com 1.79} & 65.86 {\com 1.39} & 62.93 {\com 0.28} & 66.37 {\com 1.76} & 64.60 {\com 1.20} & 65.15 {\com 0.76} & 63.68 {\com 3.34} & 65.21 {\com 1.08} \\
\bottomrule
\end{tabular}
}

\vspace{2mm}

\captionof{table}{Test accuracy on commonsense reasoning benchmark. As in Table~\ref{table_cifar100}, \textsc{Potara} obtains consistent improvements.}
\label{table_cs}
\resizebox{\textwidth}{!}{
\begin{tabular}{l|cccccccc|c}
\multicolumn{10}{c}{\textbf{Commonsense reasoning benchmark (Llama-3.2-3B-Instruct model)}} \\
\toprule
\textbf{Methods} & \textbf{ARC-c} & \textbf{ARC-e} & \textbf{BoolQ} & \textbf{HellaSwag} & \textbf{OBQA} & \textbf{PIQA} & \textbf{SIQA} & \textbf{WinoGrande} & \textbf{Avg.} \\
\midrule
Local       &  72.13 {\com  0.90} & 86.20 {\com 1.66} & 66.27 {\com 2.78} & 68.93 {\com 1.54} & 75.40 {\com 0.75} & 77.27 {\com 1.36} & 66.13 {\com 1.64} & 64.07 {\com 2.37} & 72.05 {\com 0.25} \\
FFA-LoRA     & 73.27 {\com 0.50} & 85.53 {\com 0.52} & 64.80 {\com 0.91} & 57.73 {\com 2.78} & 73.87 {\com 0.50} & 79.07 {\com 1.52} & 62.00 {\com 7.09} & 54.13 {\com 1.18} & 68.80 {\com 0.77} \\
FedSA         & 71.53 {\com 1.25} & 84.67 {\com 1.05} & 64.27 {\com 2.36} & 64.93 {\com 2.73} & 74.87 {\com 0.93} & 77.73 {\com 0.81} & 66.53 {\com 1.39} & 64.67 {\com 2.84} & 71.15 {\com 0.62} \\
FedIT       & 74.27 {\com 0.62} & 85.93 {\com 1.46} & 66.87 {\com 1.18} & 64.93 {\com 1.05} & 74.33 {\com 1.64} & 78.80 {\com 1.77} & 69.87 {\com 1.09} & 60.53 {\com 1.75} & 71.94 {\com 0.42} \\
FedALT & 72.73 {\com 0.46} & 85.00 {\com 0.35} & 64.27 {\com 0.58} & 68.33 {\com 0.92} & 76.67 {\com 0.12} & 79.00 {\com 0.00} & 65.53 {\com 0.58} & 60.33 {\com 4.39} & 71.48 {\com 0.58} \\
FedDPA        & 74.73 {\com 1.25} & 87.60 {\com 0.99} & 63.80 {\com 1.23} & 65.73 {\com 1.39} & 77.07 {\com 0.50} & 80.40 {\com 1.88} & 67.87 {\com 0.66} & 61.47 {\com 0.66} & 72.33 {\com 0.37} \\
\rowcolor{ours} \textsc{Potara}  & 75.20 {\com 1.28} & 87.80 {\com 1.31} & 67.87 {\com 0.57} & 71.73 {\com 2.23} & 78.47 {\com 1.04} & 78.80 {\com 2.36} & 69.00 {\com 1.18} & 64.53 {\com 2.00} & 74.18 {\com 0.29} \\
\bottomrule
\end{tabular}
}
\end{table*}

\subsection{Vision Benchmark}
We adopt ViT-B/16 \citep{dosovitskiy2021international} as the pre-trained model.
To simulate label heterogeneity, we partition CIFAR-100 across clients using a Dirichlet distribution with concentration parameter $\alpha = 0.5$.
For each client, we construct both the training and test splits according to the same Dirichlet-based partition.
Unless otherwise specified, we consider $10$ clients.
All federated baselines are trained for 600 communication rounds.
For both Fisher merging and the proposed \textsc{Potara}, we first train the FedIT model and the local fine-tuning model for 400 and 300 rounds, respectively, and then merge the two models to obtain the final model.
As shown in Table~\ref{table_cifar100}, the proposed \textsc{Potara} achieves the best overall performance among all baseline methods, outperforming the strongest baseline, FedDPA, by an average accuracy margin of 3.41.
Moreover, \textsc{Potara} reduces the communication cost by 33.3\% compared to FedDPA, FedALT, and FedIT. 
See Section~\ref{acc_com_trade_off} and Figure \ref{fig:cifar_acc_comm} for a detailed performance-communication trade-off analysis.

\subsection{Language Benchmark}

We next evaluate \textsc{Potara} on LLM and NLP benchmark.
We adopt LLaMA-3.2-3B-Instruct~\citep{dubey2024llama} as the pre-trained model and construct a task-heterogeneous setting by assigning each client a distinct task from the commonsense reasoning benchmark~\citep{hu2023llm} and report accuracy for all tasks. 
Each client has 1{,}000 training examples and 500 test examples. 
All federated baseline methods are trained for 600 communication rounds, while the local fine-tuning baseline is trained for 300 rounds to prevent overfitting. 
Since Fisher merging requires estimating Fisher information for all model parameters, it is computationally prohibitive for LLMs with over 1B parameters.
In contrast, \textsc{Potara} computes Fisher information only for the LoRA module, substantially reducing the number of parameters involved and making the computation tractable.
For \textsc{Potara}, we train FedIT for 50 rounds and local fine-tuning for 300 rounds, and then merge the resulting models to obtain the final personalized model for each client.
As reported in Table~\ref{table_cs}, \textsc{Potara} consistently outperforms all baselines across the evaluated NLP tasks.
In particular, \textsc{Potara} achieves the highest average accuracy, improving upon the strongest baseline by 1.85. 
Moreover, \textsc{Potara} achieves these improvements with a very small communication overhead. See Section~\ref{acc_com_trade_off} and Figure~\ref{fig:cs_acc_comm} for a detailed performance-communication trade-off analysis.
The gains observed across both large vision (ViT) and language models (LLaMA) demonstrate that \textsc{Potara} generalizes well across model families, parameter scales, and task types.

\begin{figure}[!t]
    \centering
    \begin{subfigure}[t]{0.9\linewidth}
        \centering
        \includegraphics[width=\linewidth]{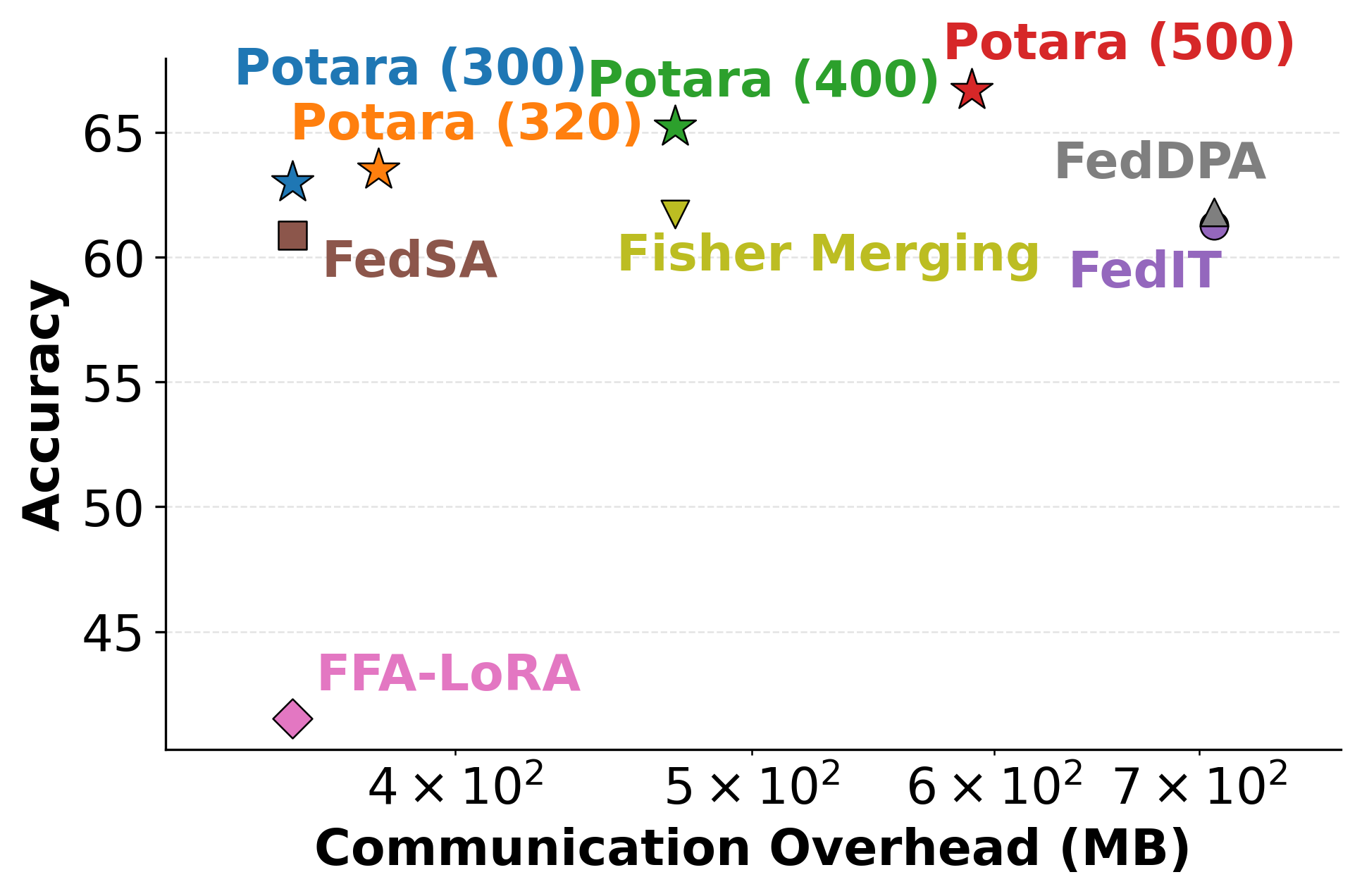}
        \caption{Personalized CIFAR-100 benchmark}
        \label{fig:cifar_acc_comm}
    \end{subfigure}
    \\
    \begin{subfigure}[t]{0.9\linewidth}
        \centering
        \includegraphics[width=\linewidth]{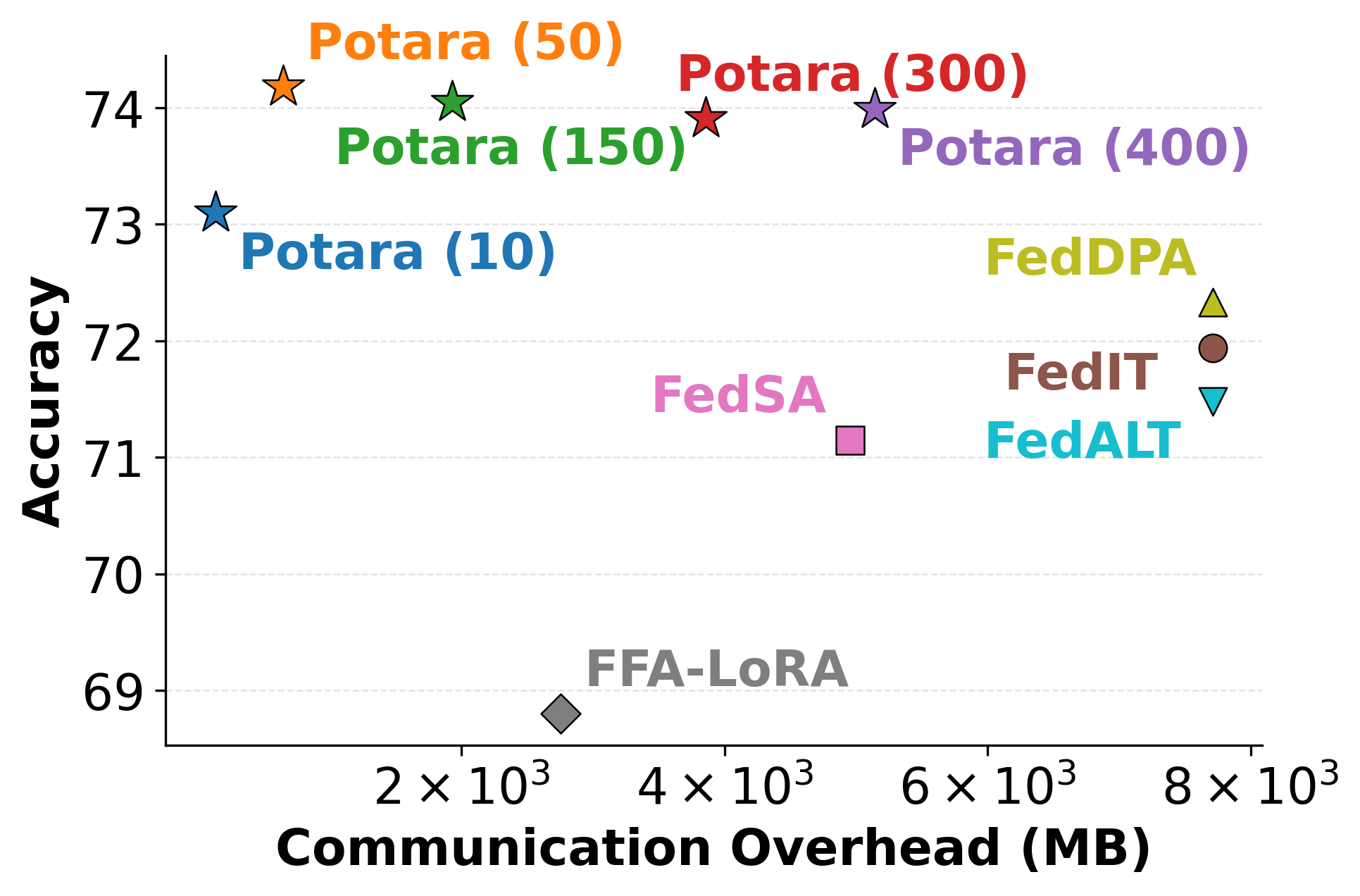}
        \caption{Commonsense reasoning benchmark}
        \label{fig:cs_acc_comm}
    \end{subfigure}
    \caption{Performance-communication trade-off across different benchmarks. 
    \textsc{Potara} ($t$) denotes our method where the FedIT model obtained at communication round $t$ is used for merging.
    \textsc{Potara} achieves strong performance with substantially lower communication overhead than baselines.}
    \label{fig:acc-comm}
    \vspace{-4mm}
\end{figure}

\subsection{Performance-Communication Trade-Off} \label{acc_com_trade_off}

In this subsection, we compare the communication overhead of all methods.
In  ViT-B/16 model, the LoRA adaptation introduces 147,456 parameters to the $\mathbf{A}$ matrices and  147,456 parameters to the $\mathbf{B}$ matrices, with each equivalent to 0.59 MB when stored in float32 precision.
In  LLaMa-3.2-3B-Instruct model, the LoRA adaptation introduces 2,064,384 parameters to the $\mathbf{A}$ matrices and 1,146,880 parameters to the $\mathbf{B}$ matrices, with equivalent to 8.26 MB and 4.59 MB when stored in float32 precision.
\textsc{Potara} ($t$) denotes our method where the FedIT model obtained at communication round $t$ is used for merging with the local model at round 300.
Fig. \ref{fig:acc-comm} shows that \textsc{Potara} consistently achieves the best performance-communication trade-off. 
In particular, \textsc{Potara} attains the highest accuracy while requiring substantially less communication than FedIT, FedALT, and FedDPA. 
Moreover, compared to low-communication baselines, FedSA and FFA-LoRA, \textsc{Potara} delivers clear performance gains at comparable overhead, demonstrating robust communication efficiency across different backbones and benchmarks.

\subsection{Performance Gain of Model Merging}

In this subsection, we show the performance gain by model merging over the FedIT and local models.
Specifically, we compare the merged model \textsc{Potara} with the FedIT model and the local model.
For the vision benchmark, \textsc{Potara} is obtained by merging the FedIT model trained for 400 rounds with the local model trained for 300 rounds.
Table~\ref{merge_gain_cifar} shows that \textsc{Potara} achieves $65.21$ accuracy, improving the local baseline by $9.32$ and the FedIT baseline by $7.33$.
For the commonsense reasoning benchmark, \textsc{Potara} is obtained by merging the FedIT model trained for 50 rounds with the local model trained for 300 rounds.
Table~\ref{merge_gain_cs} shows that \textsc{Potara} achieves $74.18$ accuracy, improving the local baseline by $2.13$ and the FedIT baseline by $6.45$.
These results indicate that model merging can effectively combine the global knowledge from FedIT model with the personalized knowledge from local fine-tuning model, leading to a substantial accuracy gain.

\begin{table}[t]
\centering
\scriptsize

\captionof{table}{Performance gain of model merging under the personalized CIFAR-100 benchmark.}
\label{merge_gain_cifar}
\resizebox{0.7\linewidth}{!}{
\begin{tabular}{|l|c|c}
\hline
\textbf{Model}  & \textbf{Accuracy} \\
\hline
Local (300 rounds) & 55.89 {\com 1.31} \\
FedIT (400 rounds) &  57.88 {\com 0.38}  \\
\textsc{Potara} &  65.21 {\com 1.08} \\
\hline
\end{tabular}
}
\vspace{4mm}
\captionof{table}{Performance gain of model merging under the commonsense reasoning benchmark.}
\label{merge_gain_cs}
\resizebox{0.7\linewidth}{!}{
\begin{tabular}{|l|c|c}
\hline
\textbf{Model}  & \textbf{Accuracy} \\
\hline
Local (300 rounds) & 72.05 {\com  0.25}\\
FedIT (50 rounds) &  67.73 {\com 0.76} \\
\textsc{Potara} &  74.18 {\com 0.29} \\
\hline
\end{tabular}
}
\vspace{-5mm}
\end{table}

\begin{figure}[t]
    \centering
    \begin{subfigure}[t]{0.99\linewidth}
        \centering
    \includegraphics[width=\linewidth]{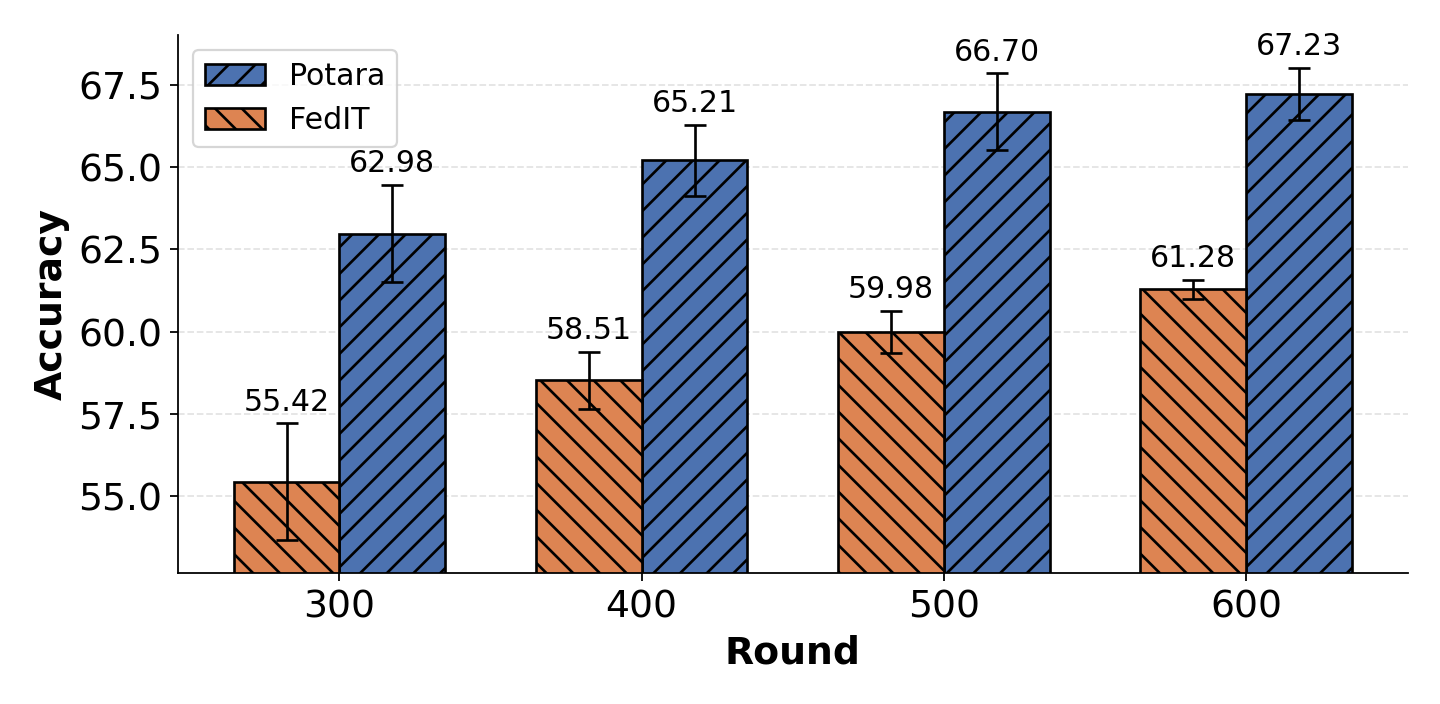}
        \caption{Effect of FedIT model round for merging (CIFAR-100).}
    \end{subfigure}
    \quad
    \begin{subfigure}[t]{0.99\linewidth}
        \centering        \includegraphics[width=\linewidth]{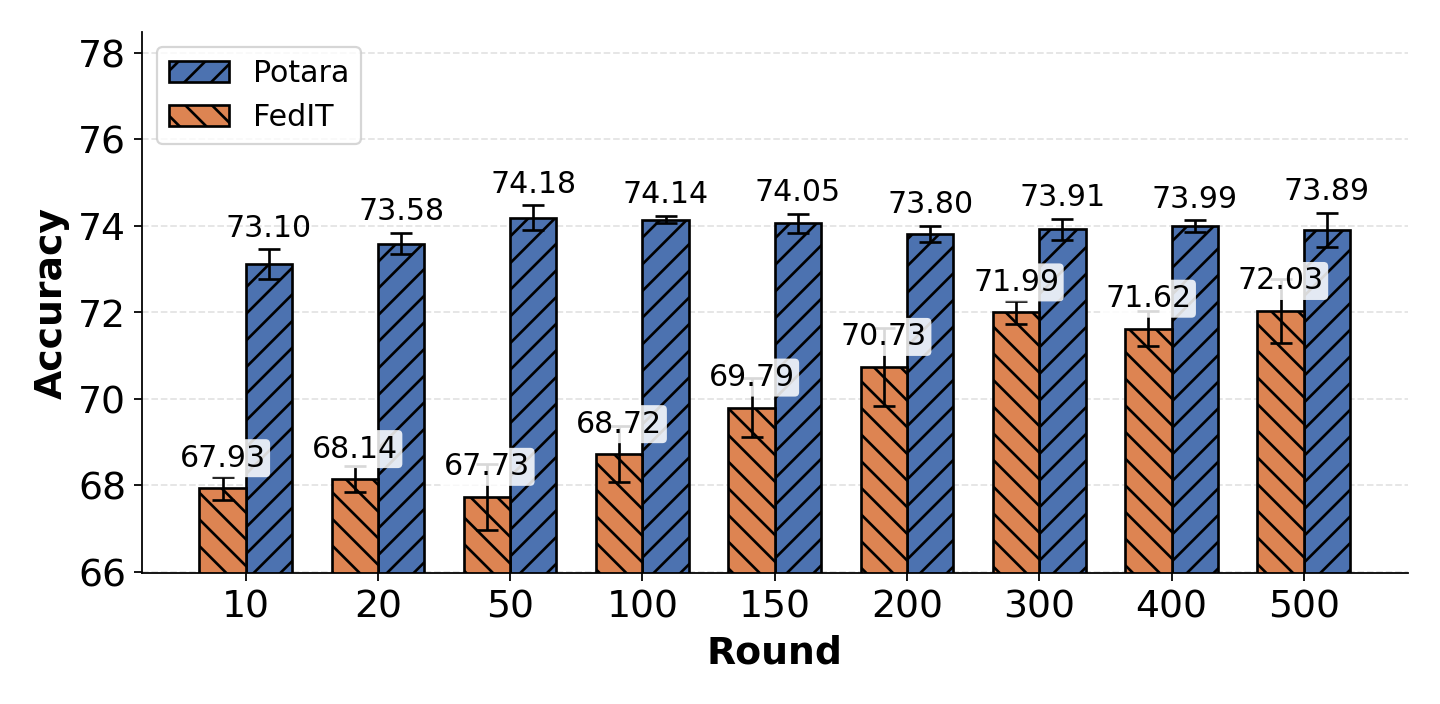}
        \caption{Effect of FedIT model round for merging (Commonsense Reasoning).}
    \end{subfigure}
    \quad
    \begin{subfigure}[t]{0.99\linewidth}
        \centering
        \includegraphics[width=\linewidth]{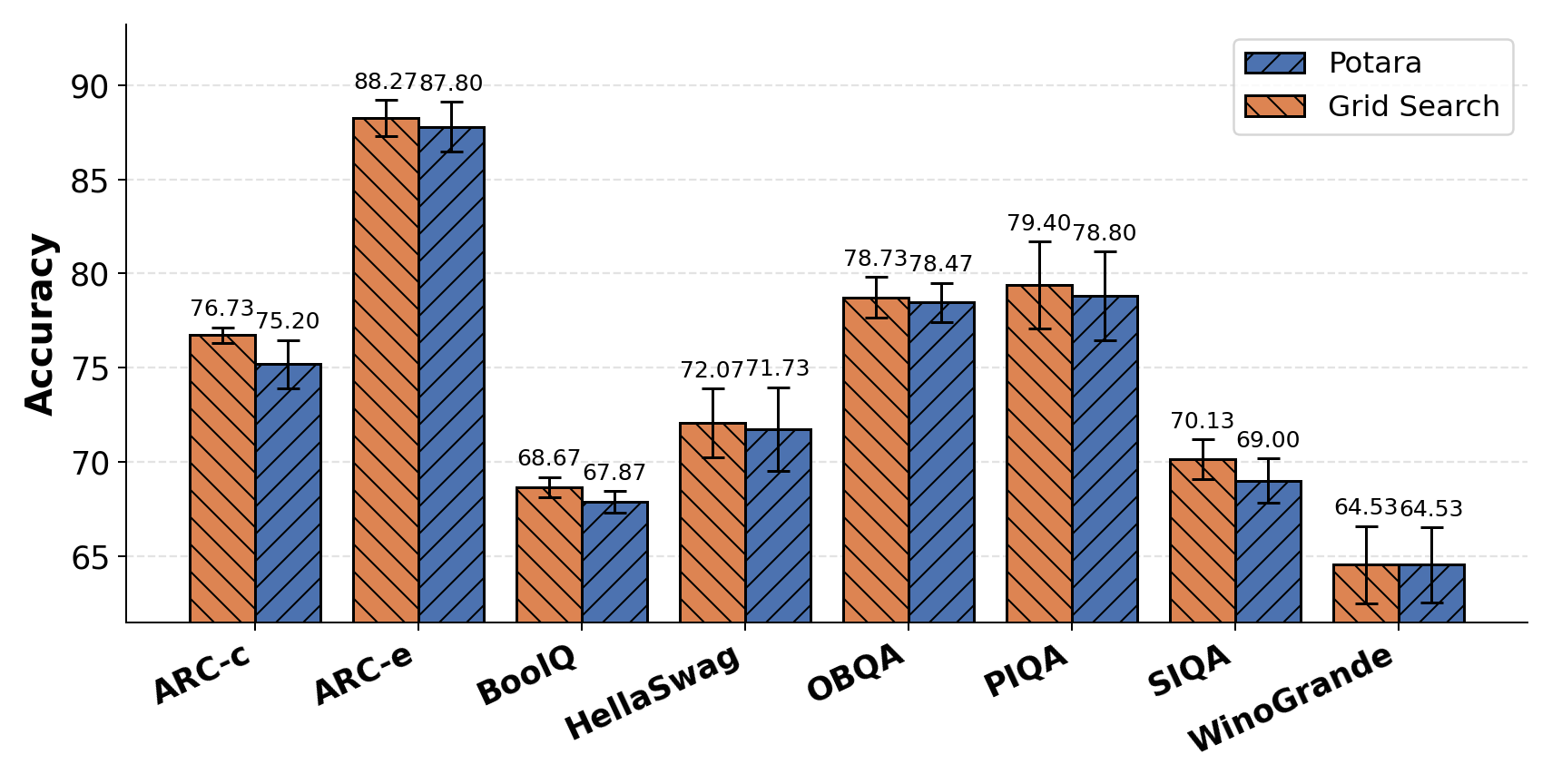}
    \caption{Client performance comparison between mixing weights computed by our method and mixing weights selected by grid search.}
    \end{subfigure}
    \caption{Ablation studies demonstrating the performance of \textsc{Potara}.}
    \label{fig:effect_round_fedit}
    \vspace{-5mm}
\end{figure}

\subsection{Ablation Study}

\noindent \textbf{Effect of FedIT model round for merging:}
We examine the effect of the FedIT training round used for merging.
Specifically, we fix the local fine-tuning model at round 300 and merge it with FedIT checkpoints trained for different numbers of rounds. 
As shown in Figure~\ref{fig:effect_round_fedit}a and Figure~\ref{fig:effect_round_fedit}b, \textsc{Potara} improves steadily as the FedIT model is trained longer on both benchmarks, indicating that a better-optimized FedIT checkpoint provides stronger general knowledge that can be more effectively combined with the local model.
On personalized CIFAR-100 benchmark (Figure~\ref{fig:effect_round_fedit}a), increasing the FedIT training rounds yields a clear monotonic gain for the merged model (e.g., from 62.98 to 67.23).
On the commonsense reasoning benchmark (Figure~\ref{fig:effect_round_fedit}b), the merged model already achieves strong performance even when using an early FedIT checkpoint (e.g., round 50), despite the FedIT still improving substantially with more rounds. 
This suggests that \textsc{Potara} can effectively extract useful general knowledge from intermediate FedIT models and combine it with local model.

\noindent \textbf{Effect of mixing weight strategy:}
We compare client performance using the mixing weights computed by our method with the best-performing weights selected by grid search over $\{0, 0.1, \ldots, 1\}$.
As shown in Figure~\ref{fig:effect_round_fedit}c, the two bars are closely matched across clients, demonstrating that our computed weights achieve performance comparable to grid search while eliminating the expensive per-client hyperparameter sweep.
Moreover, Figure~\ref{fig:effect_round_fedit}c supports our theory and assumptions as the mixing weights computed by our method are close to the empirically optimal weights found via grid search.

\paragraph{Further experiments:}
Results with an increased number of clients and with a larger model are reported in Appendix~\ref{appendix:d} and Appendix~\ref{appendix:c}, respectively.

\vspace{-3mm}
\section{Conclusion}
We proposed \textsc{Potara}, a principled personalization framework for federated fine-tuning that merges a  federated model and a local model. 
We approximate the federated and local task vectors as jointly Gaussian random variables and derive the upper bound on the expected excess task loss of the federated model, the local model, and their merged model, where the dominant term is a trace of the task vector variance. 
By minimizing this bound, we obtain closed-form optimal mixing weights, guaranteeing that the merged model attains a tighter bound than both constituent models. 
Experiment results show that \textsc{Potara} consistently improves client performance across both vision and language benchmarks, outperforming baselines while retaining communication efficiency.

\section*{Impact Statement}
This paper makes important contributions to federated personalization by developing a principled framework for personalized fine-tuning.
The focus of this work is on the technical advancement of large model fine-tuning. 
While this research has potential societal impacts, it primarily addresses technical challenges and does not necessitate a specific discussion on societal consequences.


\bibliography{ref}

@STRING{ICLR = {International Conference on Learning Representations}}

@STRING{NeurIPS = {Advances in Neural Information Processing Systems}}

@STRING{ICML = {International Conference on Machine Learning}}

@article{stoicamodel,
  title={Model merging with SVD to tie the Knots},
  author={Stoica, George and Ramesh, Pratik and Ecsedi, Boglarka and Choshen, Leshem and Hoffman, Judy},
  journal=ICLR,
  year={2025}
}

@article{qicabs,
  title={CABS: Conflict-Aware and Balanced Sparsification for Enhancing Model Merging},
  author={Qi, Binhang and Sun, Hailong and Long, Wenrui and Zhao, Ruobing and Gao, Xiang and others},
  journal=ICLR,
  year={2025}
}

@article{tammerging,
  title={Merging by Matching Models in Task Parameter Subspaces},
  author={Tam, Derek and Bansal, Mohit and Raffel, Colin},
  journal={Transactions on Machine Learning Research},
  year={2024}
}

@article{carmon2018accelerated,
  title={Accelerated methods for nonconvex optimization},
  author={Carmon, Yair and Duchi, John C and Hinder, Oliver and Sidford, Aaron},
  journal={SIAM Journal on Optimization},
  volume={28},
  number={2},
  pages={1751--1772},
  year={2018},
  publisher={SIAM}
}

@article{nesterov2006cubic,
  title={Cubic regularization of Newton method and its global performance},
  author={Nesterov, Yurii and Polyak, Boris T},
  journal={Mathematical programming},
  volume={108},
  number={1},
  pages={177--205},
  year={2006},
  publisher={Springer}
}

@article{zhou2023going,
  title={Going beyond linear mode connectivity: The layerwise linear feature connectivity},
  author={Zhou, Zhanpeng and Yang, Yongyi and Yang, Xiaojiang and Yan, Junchi and Hu, Wei},
  journal=NeurIPS,
  year={2023}
}

@article{hayou2024lora,
  title={Lora+: Efficient low rank adaptation of large models},
  author={Hayou, Soufiane and Ghosh, Nikhil and Yu, Bin},
  journal=ICML,
  year={2024}
}

@article{awais2025foundation,
  title={Foundation models defining a new era in vision: a survey and outlook},
  author={Awais, Muhammad and Naseer, Muzammal and Khan, Salman and Anwer, Rao Muhammad and Cholakkal, Hisham and Shah, Mubarak and Yang, Ming-Hsuan and Khan, Fahad Shahbaz},
  journal={IEEE Transactions on Pattern Analysis and Machine Intelligence},
  year={2025},
  publisher={IEEE}
}

@article{zhao2023survey,
  title={A survey of large language models},
  author={Zhao, Wayne Xin and Zhou, Kun and Li, Junyi and Tang, Tianyi and Wang, Xiaolei and Hou, Yupeng and Min, Yingqian and Zhang, Beichen and Zhang, Junjie and Dong, Zican and others},
  journal={arXiv preprint arXiv:2303.18223},
  volume={1},
  number={2},
  year={2023}
}

@article{wang2025more,
  title={Why Do More Experts Fail? A Theoretical Analysis of Model Merging},
  author={Wang, Zijing and Xu, Xingle and Liu, Yongkang and Zhang, Yiqun and Lin, Peiqin and Feng, Shi and Yang, Xiaocui and Wang, Daling and Sch{\"u}tze, Hinrich},
  journal={arXiv preprint arXiv:2505.21226},
  year={2025}
}

@article{ghosh2020efficient,
  title={An efficient framework for clustered federated learning},
  author={Ghosh, Avishek and Chung, Jichan and Yin, Dong and Ramchandran, Kannan},
  journal=NeurIPS,
  year={2020}
}

@article{duan2021flexible,
  title={Flexible clustered federated learning for client-level data distribution shift},
  author={Duan, Moming and Liu, Duo and Ji, Xinyuan and Wu, Yu and Liang, Liang and Chen, Xianzhang and Tan, Yujuan and Ren, Ao},
  journal={IEEE Transactions on Parallel and Distributed Systems},
  volume={33},
  number={11},
  pages={2661--2674},
  year={2021},
  publisher={IEEE}
}

@article{li2021ditto,
  title={Ditto: Fair and robust federated learning through personalization},
  author={Li, Tian and Hu, Shengyuan and Beirami, Ahmad and Smith, Virginia},
  journal=ICML,
  year={2021},
}

@article{smith2017federated,
  title={Federated multi-task learning},
  author={Smith, Virginia and Chiang, Chao-Kai and Sanjabi, Maziar and Talwalkar, Ameet S},
  journal=NeurIPS,
  year={2017}
}

@article{shamsian2021personalized,
  title={Personalized federated learning using hypernetworks},
  author={Shamsian, Aviv and Navon, Aviv and Fetaya, Ethan and Chechik, Gal},
  journal=ICML,
  year={2021},
}

@article{chen2018federated,
  title={Federated meta-learning with fast convergence and efficient communication},
  author={Chen, Fei and Luo, Mi and Dong, Zhenhua and Li, Zhenguo and He, Xiuqiang},
  journal={arXiv preprint arXiv:1802.07876},
  year={2018}
}

@article{tan2022towards,
  title={Towards personalized federated learning},
  author={Tan, Alysa Ziying and Yu, Han and Cui, Lizhen and Yang, Qiang},
  journal={IEEE transactions on neural networks and learning systems},
  volume={34},
  number={12},
  pages={9587--9603},
  year={2022},
  publisher={IEEE}
}

@article{pillutla2022federated,
  title={Federated learning with partial model personalization},
  author={Pillutla, Krishna and Malik, Kshitiz and Mohamed, Abdel-Rahman and Rabbat, Mike and Sanjabi, Maziar and Xiao, Lin},
  journal=ICML,
  year={2022},
}

@article{yu2024language,
  title={Language models are super mario: Absorbing abilities from homologous models as a free lunch},
  author={Yu, Le and Yu, Bowen and Yu, Haiyang and Huang, Fei and Li, Yongbin},
  journal=ICML,
  year={2024}
}

@article{yadav2023ties,
  title={Ties-merging: Resolving interference when merging models},
  author={Yadav, Prateek and Tam, Derek and Choshen, Leshem and Raffel, Colin A and Bansal, Mohit},
  journal=NeurIPS,
  year={2023}
}

@article{jin2022dataless,
  title={Dataless knowledge fusion by merging weights of language models},
  author={Jin, Xisen and Ren, Xiang and Preotiuc-Pietro, Daniel and Cheng, Pengxiang},
  journal=ICLR,
  year={2023}
}

@article{wortsman2022model,
  title={Model soups: averaging weights of multiple fine-tuned models improves accuracy without increasing inference time},
  author={Wortsman, Mitchell and Ilharco, Gabriel and Gadre, Samir Ya and Roelofs, Rebecca and Gontijo-Lopes, Raphael and Morcos, Ari S and Namkoong, Hongseok and Farhadi, Ali and Carmon, Yair and Kornblith, Simon and others},
  journal=ICML,
  year={2022},
}

@article{wang2024flora,
  title={Flora: Federated fine-tuning large language models with heterogeneous low-rank adaptations},
  author={Wang, Ziyao and Shen, Zheyu and He, Yexiao and Sun, Guoheng and Wang, Hongyi and Lyu, Lingjuan and Li, Ang},
  journal=NeurIPS,
  year={2024}
}

@article{bai2024federated,
  title={Federated fine-tuning of large language models under heterogeneous tasks and client resources},
  author={Bai, Jiamu and Chen, Daoyuan and Qian, Bingchen and Yao, Liuyi and Li, Yaliang},
  journal=NeurIPS,
  year={2024}
}

@article{cho2024heterogeneous,
  title={Heterogeneous lora for federated fine-tuning of on-device foundation models},
  author={Cho, Yae Jee and Liu, Luyang and Xu, Zheng and Fahrezi, Aldi and Joshi, Gauri},
  journal={Conference on Empirical Methods in Natural Language Processing},
  year={2024}
}

@article{chung2024scaling,
  title={Scaling instruction-finetuned language models},
  author={Chung, Hyung Won and Hou, Le and Longpre, Shayne and Zoph, Barret and Tay, Yi and Fedus, William and Li, Yunxuan and Wang, Xuezhi and Dehghani, Mostafa and Brahma, Siddhartha and others},
  journal={Journal of Machine Learning Research},
  volume={25},
  number={70},
  pages={1--53},
  year={2024}
}

@article{yang2025qwen3,
  title={Qwen3 technical report},
  author={Yang, An and Li, Anfeng and Yang, Baosong and Zhang, Beichen and Hui, Binyuan and Zheng, Bo and Yu, Bowen and Gao, Chang and Huang, Chengen and Lv, Chenxu and others},
  journal={arXiv preprint arXiv:2505.09388},
  year={2025}
}

@article{dosovitskiy2021international,
  title={An image is worth 16x16 words: Transformers for image recognition at scale},
  author={Dosovitskiy, Alexey and Beyer, Lucas and Kolesnikov, Alexander and Weissenborn, Dirk and Zhai, Xiaohua and Unterthiner, Thomas and Dehghani, Mostafa and Minderer, Matthias and Heigold, Georg and Gelly, Sylvain},
  journal=ICLR,
  year={2021}
}

@article{hao2025personalized,
  title={Personalized federated fine-tuning for heterogeneous data: An automatic rank learning approach via two-level lora},
  author={Hao, Jie and Wu, Yuman and Payani, Ali and Lee, Myungjin and Liu, Mingrui},
  journal={arXiv preprint arXiv:2503.03920},
  year={2025}
}

@article{collins2021exploiting,
  title={Exploiting shared representations for personalized federated learning},
  author={Collins, Liam and Hassani, Hamed and Mokhtari, Aryan and Shakkottai, Sanjay},
  journal=ICML,
  year={2021},
}

@article{fallah2020personalized,
  title={Personalized federated learning with theoretical guarantees: A model-agnostic meta-learning approach},
  author={Fallah, Alireza and Mokhtari, Aryan and Ozdaglar, Asuman},
  journal=NeurIPS,
  year={2020}
}

@article{tian2023distributed,
  title={Distributed learning over networks with graph-attention-based personalization},
  author={Tian, Zhuojun and Zhang, Zhaoyang and Yang, Zhaohui and Jin, Richeng and Dai, Huaiyu},
  journal={IEEE Transactions on Signal Processing},
  volume={71},
  pages={2071--2086},
  year={2023},
  publisher={IEEE}
}

@article{fisher1922mathematical,
  title={On the mathematical foundations of theoretical statistics},
  author={Fisher, Ronald A},
  journal={Philosophical transactions of the Royal Society of London. Series A, containing papers of a mathematical or physical character},
  volume={222},
  number={594-604},
  pages={309--368},
  year={1922},
  publisher={The Royal Society London}
}

@article{daxberger2021laplace,
  title={Laplace redux-effortless bayesian deep learning},
  author={Daxberger, Erik and Kristiadi, Agustinus and Immer, Alexander and Eschenhagen, Runa and Bauer, Matthias and Hennig, Philipp},
  journal=NeurIPS,
  year={2021}
}

@article{mackay1992practical,
  title={A practical Bayesian framework for backpropagation networks},
  author={MacKay, David JC},
  journal={Neural computation},
  volume={4},
  number={3},
  pages={448--472},
  year={1992},
  publisher={MIT Press One Rogers Street, Cambridge, MA 02142-1209, USA journals-info~…}
}

@inproceedings{mcmahan2017communication,
  title={Communication-efficient learning of deep networks from decentralized data},
  author={McMahan, Brendan and Moore, Eider and Ramage, Daniel and Hampson, Seth and y Arcas, Blaise Aguera},
  booktitle={Artificial intelligence and statistics},
  pages={1273--1282},
  year={2017},
  organization={PMLR}
}

@article{neyshabur2020being,
  title={What is being transferred in transfer learning?},
  author={Neyshabur, Behnam and Sedghi, Hanie and Zhang, Chiyuan},
  journal=NeurIPS,
  year={2020}
}

@article{nagarajan2019uniform,
  title={Uniform convergence may be unable to explain generalization in deep learning},
  author={Nagarajan, Vaishnavh and Kolter, J Zico},
  journal=NeurIPS,
  year={2019}
}

@article{draxler2018essentially,
  title={Essentially no barriers in neural network energy landscape},
  author={Draxler, Felix and Veschgini, Kambis and Salmhofer, Manfred and Hamprecht, Fred},
  journal=ICML,
  year={2018}
}

@article{frankle2020linear,
  title={Linear mode connectivity and the lottery ticket hypothesis},
  author={Frankle, Jonathan and Dziugaite, Gintare Karolina and Roy, Daniel and Carbin, Michael},
  journal=ICML,
  year={2020}
}

@article{garipov2018loss,
  title={Loss surfaces, mode connectivity, and fast ensembling of dnns},
  author={Garipov, Timur and Izmailov, Pavel and Podoprikhin, Dmitrii and Vetrov, Dmitry P and Wilson, Andrew G},
  journal=NeurIPS,
  year={2018}
}

@article{qi2024fdlora,
  title={{FDLoRA}: Personalized federated learning of large language model via dual lora tuning},
  author={Qi, Jiaxing and Luan, Zhongzhi and Huang, Shaohan and Fung, Carol and Yang, Hailong and Qian, Depei},
  journal={arXiv preprint arXiv:2406.07925},
  year={2024}
}

@article{pascanu2013revisiting,
  title={Revisiting natural gradient for deep networks},
  author={Pascanu, Razvan and Bengio, Yoshua},
  journal={arXiv preprint arXiv:1301.3584},
  year={2013}
}

@article{matena2022merging,
  title={Merging models with fisher-weighted averaging},
  author={Matena, Michael S and Raffel, Colin A},
  journal=NeurIPS,
  year={2022}
}

@article{sun2024improving,
  title={Improving Lo{RA} in Privacy-preserving Federated Learning},
  author={Sun, Youbang and Li, Zitao and Li, Yaliang and Ding, Bolin},
  journal=ICLR,
  year={2024}
}

@article{bian2025fedalt,
  title={FedALT: Federated Fine-Tuning through Adaptive Local Training with Rest-of-World LoRA},
  author={Bian, Jieming and Wang, Lei and Zhang, Letian and Xu, Jie},
  journal={AAAI Conference on Artificial Intelligence},
  year={2026}
}

@article{
ilharco2023editing,
title={Editing models with task arithmetic},
  author={Ilharco, Gabriel and Ribeiro, Marco Tulio and Wortsman, Mitchell and Gururangan, Suchin and Schmidt, Ludwig and Hajishirzi, Hannaneh and Farhadi, Ali},
journal=ICLR,
year={2023},
}

@article{yang2024dual,
 author = {Yang, Yiyuan and Long, Guodong and Shen, Tao and Jiang, Jing and Blumenstein, Michael},
 title = {Dual-Personalizing Adapter for Federated Foundation Models},
 journal = NeurIPS,
 year = {2024}
}

@article{chen2025fedmerge,
  title={FedMerge: Federated Personalization via Model Merging},
  author={Chen, Shutong and Zhou, Tianyi and Long, Guodong and Jiang, Jing and Zhang, Chengqi},
  journal={AAAI Conference on Artificial Intelligence},
  year={2026}
}

@inproceedings{zhang2024towards,
  title={Towards building the federatedgpt: Federated instruction tuning},
  author={Zhang, Jianyi and Vahidian, Saeed and Kuo, Martin and Li, Chunyuan and Zhang, Ruiyi and Yu, Tong and Wang, Guoyin and Chen, Yiran},
  booktitle={ICASSP 2024-2024 IEEE International Conference on Acoustics, Speech and Signal Processing (ICASSP)},
  pages={6915--6919},
  year={2024},
  organization={IEEE}
}

@article{guo2024selective,
  title={Selective Aggregation for Low-Rank Adaptation in Federated Learning},
  author={Guo, Pengxin and Zeng, Shuang and Wang, Yanran and Fan, Huijie and Wang, Feifei and Qu, Liangqiong},
  journal=ICLR,
  year={2025}
}

@article{hu2022lora,
  title={Lora: Low-rank adaptation of large language models.},
  author={Hu, Edward J and Shen, Yelong and Wallis, Phillip and Allen-Zhu, Zeyuan and Li, Yuanzhi and Wang, Shean and Wang, Lu and Chen, Weizhu and others},
  journal=ICLR,
  year={2022}
}

@article{fang2025federated,
  title={Federated sketching lora: On-device collaborative fine-tuning of large language models},
  author={Fang, Wenzhi and Han, Dong-Jun and Yuan, Liangqi and Hosseinalipour, Seyyedali and Brinton, Christopher G},
  journal={arXiv preprint arXiv:2501.19389},
  year={2025}
}

@article{dubey2024llama,
  title={The llama 3 herd of models},
  author={Dubey, Abhimanyu and Jauhri, Abhinav and Pandey, Abhinav and Kadian, Abhishek and Al-Dahle, Ahmad and Letman, Aiesha and Mathur, Akhil and Schelten, Alan and Yang, Amy and Fan, Angela and others},
  journal={arXiv e-prints},
  pages={arXiv--2407},
  year={2024}
}

@article{hu2023llm,
  title={Llm-adapters: An adapter family for parameter-efficient fine-tuning of large language models},
  author={Hu, Zhiqiang and Wang, Lei and Lan, Yihuai and Xu, Wanyu and Lim, Ee-Peng and Bing, Lidong and Xu, Xing and Poria, Soujanya and Lee, Roy Ka-Wei},
  journal={arXiv preprint arXiv:2304.01933},
  year={2023}
}
\bibliographystyle{icml2026}

\newpage
\appendix
\onecolumn

\section{Theoretical Results}

\subsection{Derivation of Gaussian Distribution Approximation} \label{appendix:A1}

The gaussian distribution approximation has been used in prior works \citep{matena2022merging,tammerging}, but the full derivation is omitted. 
For completeness, we provide the full derivation.

    The training of the individual task vector for task $i$ can be formulated as a maximum a posteriori (MAP) estimation problem \citep{matena2022merging,wang2025more} as follows 
\begin{align}
    \bm{\mu}_i \, = \, \arg \mathop{\max}_{\bm{\theta}_i} \, p(\bm{\theta}_i \, | \, \mathcal{D}_i),
\end{align}
where $\mathcal{D}_i$ denotes the dataset associated with task $i$. 
Here, task $i$ may correspond either to the FedIT objective or to the local fine-tuning objective.
 
We perform a second-order Taylor expansion of the log-posterior around a point  $\hat{\bm{\theta}}$:
\begin{align}
    \log p(\bm{\theta}_i \, | \, \mathcal{D}_i)
    &\approx 
     \log p(\hat{\bm{\theta}} \, | \, \mathcal{D}_i) 
      + (\bm{\theta}_i-\hat{\bm{\theta}})^\top 
     \nabla_{\bm{\theta}_i}  \log p(\bm{\theta}_i \, | \, \mathcal{D}_i) |_{\bm{\theta}_i = \hat{\bm{\theta}}}  
      - \frac{1}{2} 
     (\bm{\theta}_i-\hat{\bm{\theta}})^\top
     \mathbf{H}_i
     (\bm{\theta}_i-\hat{\bm{\theta}}),
\end{align}
where $\mathbf{H}_i = - \nabla^2_{\bm{\theta}_i} \log p(\bm{\theta}_i \, | \, \mathcal{D}_i) |_{\bm{\theta}_i = \hat{\bm{\theta}}}$.

We assume that the parameter $\bm{\mu}_i$ of 
the trained neural network is a local maximum of the posterior distribution. 
By setting $\hat{\bm{\theta}} = \bm{\mu}_i$, we have  $\nabla_{\bm{\theta}_i}  \log p(\bm{\theta}_i \, | \, \mathcal{D}_i) |_{\bm{\theta}_i = \bm{\mu}_i } = 0$.

Then, we obtain
\begin{align}
    \log p(\bm{\theta}_i \, | \, \mathcal{D}_i)
    \approx
     \log p(\bm{\mu}_i \, | \, \mathcal{D}_i)
     - \frac{1}{2} 
     (\bm{\theta}_i-\bm{\mu}_i)^\top
     \mathbf{H}_i
     (\bm{\theta}_i-\bm{\mu}_i),
\end{align}
where $\mathbf{H}_i = - \nabla^2_{\bm{\theta}_i} \log p(\bm{\theta}_i \, | \, \mathcal{D}_i) |_{\bm{\theta}_i = \bm{\mu}_i}$.

Applying the Laplace approximation \citep{mackay1992practical,daxberger2021laplace} around the local optimum $\bm{\mu}_i$, the posterior distribution $p(\bm{\theta}_i \, | \, \mathcal{D}_i)$  admits the following Gaussian approximation:
\begin{align}
    p(\bm{\theta}_i \, | \, \mathcal{D}_i) \approx \mathcal{N}(\bm{\theta}_i;\bm{\mu}_i, \mathbf{H}_i^{-1})
\end{align}
Equivalently, we write
\begin{align}
    \bm{\theta}_i  \sim \mathcal{N}(\bm{\mu}_i, \bm{\Sigma}_i) \quad \text{where}\, \bm{\Sigma}_i = \mathbf{H}_i^{-1}.
\end{align}

Given a trained neural network model $p_{\bm{\mu}_i}(\bm{y} \mid \bm{x})$, the Fisher information matrix \citep{matena2022merging,fisher1922mathematical} is defined as
\begin{align}
\mathbf{F}_{\bm{\mu}_i} 
=
\mathbb{E}_{\bm{x}}
\Big[
\mathbb{E}_{\bm{y}  \sim p_{\bm{\mu}_i}\!(\bm{y} \mid \bm{x})}
\big[
\nabla_{\bm{\mu}_i} \log p_{\bm{\mu}_i}(\bm{y} \mid \bm{x})
\nabla_{\bm{\mu}_i} \log p_{\bm{\mu}_i}(\bm{y} \mid \bm{x})^\top
\big]
\Big].
\end{align}
It can be shown that the Fisher information matrix coincides with the Hessian at modes of the distribution \citep{pascanu2013revisiting}.

\subsection{Proof of Theorem \ref{loss_bound}} \label{appendix:loss_bound}

{

Recall that the joint distribution of $(\bm{\theta}_{\mathrm{FedIT}},\bm{\theta}_{\mathrm{Local}})$ is given by
\begin{align}
         \begin{bmatrix}
         \bm{\theta}_{\mathrm{FedIT}}
         \\ \bm{\theta}_{\mathrm{Local}}
        \end{bmatrix} 
        \sim \mathcal{N} \bigg( \begin{bmatrix}
         \bm{\mu}_{\mathrm{FedIT}}
         \\ \bm{\mu}_{\mathrm{Local}}
        \end{bmatrix},
        \begin{bmatrix}
            \bm{\Sigma}_{\mathrm{FedIT}} &\bm{\Sigma}_{\mathrm{Cross}}
           \\  \bm{\Sigma}_{\mathrm{Cross}}^\top & \bm{\Sigma}_{\mathrm{Local}}
        \end{bmatrix} \bigg).
\end{align}
where $\bm{\Sigma}_{\mathrm{Cross}}$ denotes the cross-covariance between the two task vectors with
$\bm{\Sigma}_{\mathrm{Cross}} \succeq \mathbf{0}$, $\bm{\Sigma}_{\mathrm{Cross}} \preceq \bm{\Sigma}_{\mathrm{FedIT}}$ and
$\bm{\Sigma}_{\mathrm{Cross}} \preceq \bm{\Sigma}_{\mathrm{Local}}$.

Then, the merged task vector follows the following Gaussian distribution:
\begin{align}
    \bm{\theta}_{\mathrm{Merge}} 
    \sim \mathcal{N}(\bm{\mu}_{\mathrm{Merge}}, \bm{\Sigma}_{\mathrm{Merge}})
\end{align}
where
\begin{align}
&\bm{\mu}_{\mathrm{merge}} = \lambda_{\mathrm{FedIT}} \bm{\mu}_{\mathrm{FedIT}} + \lambda_{\mathrm{Local}} \bm{\mu}_{\mathrm{Local}},
\\
&\bm{\Sigma}_{\mathrm{merge}} 
= \lambda_{\mathrm{FedIT}}^2 \bm{\Sigma}_{\mathrm{FedIT}} + \lambda_{\mathrm{Local}}^2 \bm{\Sigma}_{\mathrm{Local}}
   + \lambda_{\mathrm{FedIT}}\lambda_{\mathrm{Local}}(\bm{\Sigma}_{\mathrm{Cross}}+\bm{\Sigma}_{\mathrm{Cross}}^{\!\top}),
\\& \lambda_{\mathrm{FedIT}},\lambda_{\mathrm{Local}} \ge 0,
\\& \lambda_{\mathrm{FedIT}} + \lambda_{\mathrm{Local}} = 1.
\end{align}

To analyze the effect of model merging, we expand the task loss $\mathcal L$ around its optimum $\mathbf W^*$.
Let $\bm\phi \;=\; \mathbf W_{\mathrm{Pre}}+\bm\theta_{\mathrm{Merge}}-\mathbf W^*$ and $
\mathbf H \;=\; \nabla^2\mathcal L(\mathbf W^*)$, where $\mathbf{W}_{\mathrm{Pre}}$ is the pre-trained model that can be regarded as a constant vector.
By Assumption \ref{assump:L_H} and Lemma 2.2 in \cite{carmon2018accelerated}, we have
\begin{align}
    \mathcal L(\mathbf W_{\mathrm{Pre}}+\bm\theta_{\mathrm{Merge}})
    \leq
    \mathcal L(\mathbf W^*)
    +\frac12\,\bm\phi^\top \mathbf H\,\bm\phi
    + \frac{L_H}{6} \|\bm\phi\|_2^3.
\end{align}

By taking expectation, we obtain
\begin{align}
    \mathbb{E}\big[\mathcal L(\mathbf{W}_{\mathrm{Pre}}+ \bm{\theta}_{\mathrm{Merge}})-\mathcal L(\mathbf{W}^*)\big]
\leq \frac12\,\mathbb{E}\!\big[\bm{\phi}^\top \mathbf{H} \bm{\phi} \big] + \frac{L_H}{6}\mathbb{E}[\|\bm\phi\|_2^3].
\end{align}

Using Lemma~\ref{identity} (i.e.,
$\mathbb{E}[\bm{\phi}^\top \mathbf{H} \bm{\phi}]
=(\mathbb{E}\bm{\phi})^\top \mathbf{H}(\mathbb{E}\bm{\phi})
+\mathrm{tr}(\mathbf{H}\,\mathrm{Cov}(\bm{\phi}))$),
we obtain
\begin{align}
&\mathbb{E}\big[\mathcal L(\mathbf{W}_{\mathrm{Pre}}+ \bm{\theta}_{\mathrm{Merge}})
-\mathcal L(\mathbf{W}^*)\big] \notag
\\ &
\leq \frac12\,\mathbb{E}\!\big[\bm{\phi}^\top \mathbf{H}\bm{\phi}\big]
+ \frac{L_H}{6}\mathbb{E}[\|\bm\phi\|_2^3] \notag\\ 
& = 
\frac12 (\mathbb{E}\bm{\phi})^\top \mathbf{H}(\mathbb{E}\bm{\phi})
+ \frac12 \mathrm{tr}(\mathbf{H}\,\mathrm{Cov}(\bm{\phi})) 
+ \frac{L_H}{6}\mathbb{E}[\|\bm\phi\|_2^3]  \notag \\
&=
\frac12
(\mathbf{W}_{\mathrm{Pre}} + \bm{\mu}_{\mathrm{Merge}}-\mathbf{W}^*)^\top
\mathbf{H}
(\mathbf{W}_{\mathrm{Pre}} + \bm{\mu}_{\mathrm{Merge}}-\mathbf{W}^*)
+\frac12 \mathrm{tr}(\mathbf{H}\,\bm{\Sigma}_{\mathrm{Merge}})
+ \frac{L_H}{6} \mathbb{E}\big[  \|\bm\phi\|_2^3  \big],  \label{eq:bound_step1}
\end{align}
where $\bm{\mu}_{\mathrm{Merge}}=\mathbb{E}[\bm{\theta}_{\mathrm{Merge}}]$ and 
$\bm{\Sigma}_{\mathrm{Merge}}=\mathrm{Cov}(\bm{\theta}_{\mathrm{Merge}})$.

By Assumption \ref{assump:basin_event}, we have $\|\bm\phi\|_2\le \delta$,
and therefore $\|\bm\phi\|_2^3 \le \delta\,\|\bm\phi\|_2^2$.
Taking expectation yields $\mathbb{E}\|\bm{\phi}\|_2^3
\le \delta\,\mathbb{E}\|\bm{\phi}\|_2^2$.
By the second-moment decomposition, we have
\begin{align}
\mathbb{E}\|\bm{\phi}\|_2^2
=
\|\mathbb{E}\bm{\phi}\|_2^2 + \mathrm{tr}(\mathrm{Cov}(\bm{\phi}))
=
\|\mathbf{W}_{\mathrm{Pre}} + \bm{\mu}_{\mathrm{Merge}}-\mathbf{W}^*\|_2^2
+\mathrm{tr}(\bm{\Sigma}_{\mathrm{Merge}}). \label{eq:second_moment}
\end{align}
Combining \eqref{eq:bound_step1}--\eqref{eq:second_moment}, we conclude that
\begin{align}
\mathbb{E}\big[\mathcal L(\mathbf{W}_{\mathrm{Pre}}+ \bm{\theta}_{\mathrm{Merge}})
-\mathcal L(\mathbf{W}^*)\big]
&\le
\frac12
(\mathbf{W}_{\mathrm{Pre}} + \bm{\mu}_{\mathrm{Merge}}-\mathbf{W}^*)^\top
\mathbf{H}
(\mathbf{W}_{\mathrm{Pre}} + \bm{\mu}_{\mathrm{Merge}}-\mathbf{W}^*)
+\frac12 \mathrm{tr}(\mathbf{H}\,\bm{\Sigma}_{\mathrm{Merge}})
\notag \\
&\quad
+\frac{L_H}{6} \delta\,
\Big(
\|\mathbf{W}_{\mathrm{Pre}} + \bm{\mu}_{\mathrm{Merge}}-\mathbf{W}^*\|_2^2
+\mathrm{tr}(\bm{\Sigma}_{\mathrm{Merge}})
\Big). \label{eq:final_bound}
\end{align}

Then, we have 
\begin{align}
    \mathbb{E}\big[\mathcal L(\mathbf{W}_{\mathrm{Pre}}+ \bm{\theta}_{\mathrm{Merge}})
    -\mathcal L(\mathbf{W}^*)\big]
    &\le \frac{L}{2} \| \mathbf{W}_{\mathrm{Pre}} + \bm{\mu}_{\mathrm{Merge}}-\mathbf{W}^* \|^2 
    + \frac{L}{2}  \mathrm{tr}(\bm{\Sigma}_{\mathrm{Merge}}) \notag
    \\& \quad +\frac{L_H}{6} \delta\,
    \Big(
    \|\mathbf{W}_{\mathrm{Pre}} + \bm{\mu}_{\mathrm{Merge}}-\mathbf{W}^*\|_2^2
    +\mathrm{tr}(\bm{\Sigma}_{\mathrm{Merge}})
    \Big) \notag
    \\& \le \frac{L}{2} \delta^2 + \frac{L}{2}  \mathrm{tr}(\bm{\Sigma}_{\mathrm{Merge}})
    + \frac{L_H}{6} \delta^3 + \frac{L_H}{6} \delta \mathrm{tr}(\bm{\Sigma}_{\mathrm{Merge}}) \notag
    \\& =  \bigg(\frac{L}{2} + \frac{L_H}{6}\delta \bigg)\delta^2 + \bigg( \frac{L}{2} + \frac{L_H}{6}\delta \bigg) \mathrm{tr}(\bm{\Sigma}_{\mathrm{Merge}})
\end{align}
where the first inequality is due to Assumption \ref{smoothness}, and the second inequality is due to Assumption \ref{assump:basin_event}.

Similarly, for each individual task vector we have
\begin{align}
    \mathbb{E}\big[\mathcal L(\mathbf{W}_{\mathrm{Pre}}+ \bm{\theta}_{i})-\mathcal L(\mathbf{W}^*)\big]
    \leq
    \bigg(\frac{L}{2} + \frac{L_H}{6}\delta \bigg)\delta^2 + \bigg( \frac{L}{2} + \frac{L_H}{6}\delta \bigg) \mathrm{tr}(\bm{\Sigma}_{i}), \quad i \in \{\mathrm{FedIT},\mathrm{Local}\}.
\end{align}

\subsection{Proof of Lemma \ref{corollary}}

Let 
\begin{align}
    a = \mathrm{tr}(\bm{\Sigma}_{\mathrm{FedIT}}), 
\quad 
b = \mathrm{tr}(\bm{\Sigma}_{\mathrm{Local}}), 
\quad 
c = \mathrm{tr}(\bm{\Sigma}_{\mathrm{Cross}}).
\end{align}
Hence
\begin{align}
\mathrm{tr}(\bm{\Sigma}_{\mathrm{Merge}})
& = \mathrm{tr}(\lambda_{\mathrm{FedIT}}^2 \bm{\Sigma}_{\mathrm{FedIT}} + \lambda_{\mathrm{Local}}^2 \cdot \bm{\Sigma}_{\mathrm{Local}} + \lambda_{\mathrm{FedIT}} \lambda_{\mathrm{Local}} (\bm{\Sigma}_{\mathrm{Cross}} + \bm{\Sigma}_{\mathrm{Cross}}^{\top}) ) 
\\&= \lambda_{\mathrm{FedIT}}^2 \cdot a + \lambda_{\mathrm{Local}}^2 \cdot b + 2\lambda_{\mathrm{FedIT}} \lambda_{\mathrm{Local}} \cdot c 
\end{align}

Using the Lemma \ref{lemma2},
we have
\begin{align}
    |c| \;=\; |\mathrm{tr}(\bm{\Sigma}_{\mathrm{Cross}})|
\;\le\;
\sqrt{\mathrm{tr}(\bm{\Sigma}_{\mathrm{FedIT}})\,
       \mathrm{tr}(\bm{\Sigma}_{\mathrm{Local}})}
\;=\;
\sqrt{ab}.
\end{align}

We define a function $g(\cdot)$ with respect to $\lambda_{\mathrm{FedIT}}$ as follows
\begin{align}
    g(\lambda_{\mathrm{FedIT}}) = \mathrm{tr}(\bm{\Sigma}_{\mathrm{Merge}})
= a\lambda_{\mathrm{FedIT}}^2 + b(1-\lambda_{\mathrm{FedIT}})^2 + 2\lambda_{\mathrm{FedIT}}(1-\lambda_{\mathrm{FedIT}})c,
\end{align}
where $0 \leq \lambda_{\mathrm{FedIT}} \leq 1$.

This quadratic form is convex in $\lambda_{\mathrm{FedIT}}$,
because $g^{''}(\lambda_{\mathrm{FedIT}}) = 2(a+b-2c) \geq 2(a+b-2\sqrt{ab}) = 2(\sqrt{a}-\sqrt{b})^2 \geq 0$.

Then, we obtain the optimal $ \lambda_{\mathrm{FedIT}}$ and the optimum as 
\begin{align}
    \lambda_{\mathrm{FedIT}}^{*}
    = \frac{b - c}{a + b - 2c} \in \, [0,1],
\end{align}
\begin{align}
    g(\lambda_{\mathrm{FedIT}}^{*}) = b - \frac{(b - c)^2}{a+b-2c} = a - \frac{(a -c)^2}{a+b-2c}.
\end{align}

Due to $\bm{\Sigma}_{\mathrm{Cross}} \preceq \bm{\Sigma}_\mathrm{FedIT}$ and $\bm{\Sigma}_{\mathrm{Cross}} \preceq \bm{\Sigma}_\mathrm{Local}$, it follows that $\mathrm{tr}(\bm{\Sigma}_{\mathrm{Cross}}) \le \mathrm{tr}(\bm{\Sigma}_{\mathrm{FedIT}})$ and $\mathrm{tr}(\bm{\Sigma}_{\mathrm{Cross}}) \le \mathrm{tr}(\bm{\Sigma}_{\mathrm{Local}})$.
Combining both inequalities yields $c \le \min(a,b)$.
Therefore, the optimum is smaller than $a$ and $b$, i.e., $g(\lambda_{\mathrm{FedIT}}^{*}) \leq \min\{a,b\}$.

So far, we have shown that, at the optimal mixing weights $\lambda^{*}_{\mathrm{FedIT}}$ and $\lambda^{*}_{\mathrm{Local}}$, $\mathrm{tr}(\bm{\Sigma}_{\mathrm{Merge}})$ is smaller than $\min\{\mathrm{tr}(\bm{\Sigma}_{\mathrm{FedIT}}), \mathrm{tr}(\bm{\Sigma}_{\mathrm{FedIT}})\}$.
Therefore, the merged model admits an excess loss upper bound that is tighter than the bounds for FedIT or local
fine-tuning alone.

}


\subsection{Proof of Lemmas}

\begin{lemma}\label{identity}
    Let \(\bm{\mu}=\mathbb{E}[\bm{\phi}]\) and \(\bm{\Sigma}=\mathrm{Cov}(\bm{\phi})=\mathbb{E}\big[(\bm{\phi}-\bm{\mu})(\bm{\phi}-\bm{\mu})^\top\big]\).
    We have
    \begin{align}
    \mathbb{E}[\bm{\phi}^\top \mathbf{H} \bm{\phi}]
    =(\mathbb{E}\bm{\phi})^\top \mathbf{H}(\mathbb{E}\bm{\phi})+\mathrm{tr}(\mathbf{H}\,\mathrm{Cov}(\bm{\phi}))
    \end{align}
\end{lemma}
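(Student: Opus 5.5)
The identity follows from linearity of expectation and of the trace, using the scalar-as-trace trick. The plan is to write $\bm{\phi} = \bm{\mu} + (\bm{\phi}-\bm{\mu})$ and expand the quadratic form:
\begin{align}
\bm{\phi}^\top \mathbf{H}\bm{\phi}
= \bm{\mu}^\top \mathbf{H}\bm{\mu}
+ \bm{\mu}^\top \mathbf{H}(\bm{\phi}-\bm{\mu})
+ (\bm{\phi}-\bm{\mu})^\top \mathbf{H}\bm{\mu}
+ (\bm{\phi}-\bm{\mu})^\top \mathbf{H}(\bm{\phi}-\bm{\mu}).
\end{align}
I will then take expectations term by term. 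The first term is deterministic. The two cross terms are linear in $\bm{\phi}-\bm{\mu}$, so their expectations vanish because $\mathbb{E}[\bm{\phi}-\bm{\mu}]=\mathbf{0}$. No symmetry of $\mathbf{H}$ is needed here, although $\mathbf{H}=\nabla^2\mathcal{L}(\mathbf{W}^*)$ is symmetric anyway.

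For the last term, a scalar equals its own trace, so I will use cyclicity to write
\[
(\bm{\phi}-\bm{\mu})^\top \mathbf{H}(\bm{\phi}-\bm{\mu})
= \mathrm{tr}\big(\mathbf{H}(\bm{\phi}-\bm{\mu})(\bm{\phi}-\bm{\mu})^\top\big).
\]
Since trace and matrix multiplication by the fixed matrix $\mathbf{H}$ are linear, they commute with expectation. This gives $\mathrm{tr}(\mathbf{H}\,\mathbb{E}[(\bm{\phi}-\bm{\mu})(\bm{\phi}-\bm{\mu})^\top]) = \mathrm{tr}(\mathbf{H}\bm{\Sigma})$. Adding the surviving terms yields the claimed identity.

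The only point needing care is integrability: the second moments of $\bm{\phi}$ must be finite for the expectations to exist and for the interchanges to be valid. In the paper's setting $\bm{\phi}$ is an affine function of the jointly Gaussian task vectors from Assumption~\ref{joint_Gaussian}, so all its moments are finite and the interchange is justified. I expect no real obstacle beyond stating this.
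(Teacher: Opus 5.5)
Your proof is correct and is essentially the paper's argument. Both rest on the scalar-as-trace trick and linearity of expectation; the paper applies the trick to the raw second moment and then uses $\mathbb{E}[\bm{\phi}\bm{\phi}^\top]=\bm{\Sigma}+\bm{\mu}\bm{\mu}^\top$, whereas you center first and let the cross terms vanish, which is exactly how that second-moment identity is proved.
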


\begin{proof}
\begin{align}
    \mathbb{E}[\bm{\phi}^\top \mathbf{H} \bm{\phi}]
    &= \mathbb{E}\big[\mathrm{tr}(\bm{\phi}^\top \mathbf{H} \bm{\phi})\big]
    = \mathbb{E}\big[\mathrm{tr}(\mathbf{H}\bm{\phi}\bm{\phi}^\top)\big]
    = \mathrm{tr}\big(\mathbf{H}\,\mathbb{E}[\bm{\phi}\bm{\phi}^\top]\big) \notag
    \\
    &= \mathrm{tr}\big(\mathbf{H}(\bm{\Sigma}+\bm{\mu}\bm{\mu}^\top)\big)
    = \mathrm{tr}(\mathbf{H}\bm{\Sigma})+\mathrm{tr}(\mathbf{H}\bm{\mu}\bm{\mu}^\top) \notag
    \\&= \mathrm{tr}(\mathbf{H}\bm{\Sigma})+\bm{\mu}^\top \mathbf{H}\bm{\mu}
\end{align}
where we used \(\mathbb{E}[\bm{\phi}\bm{\phi}^\top]=\bm{\Sigma}+\bm{\mu}\bm{\mu}^\top\) and
\(\mathrm{tr}(\mathbf{AB})=\mathrm{tr}(\mathbf{BA})\).
\end{proof}

\begin{lemma}\label{lemma2}
Assume \( \begin{bmatrix} \bm{\Sigma}_1 & \bm{\Sigma}_{12}\\ \bm{\Sigma}_{12}^\top & \bm{\Sigma}_2 \end{bmatrix}\succeq 0. \) Then
\begin{align}
    \big|\mathrm{tr}(\bm{\Sigma}_{12})\big| \;\le\; \sqrt{\mathrm{tr}(\bm{\Sigma}_1)\,\mathrm{tr}(\bm{\Sigma}_2)}
\end{align}
\end{lemma}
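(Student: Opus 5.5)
The plan is to reduce the statement to a Cauchy--Schwarz inequality for a trace inner product, using a block factorization of the joint covariance. Since $\bm{\Sigma}_{12}$ need not be symmetric or definite, I will route the argument through a Gram (factorization) representation rather than through eigenvalues.

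The first step is to factor the positive semidefinite block matrix as $\mathbf{M}=\mathbf{G}^\top\mathbf{G}$, where $\mathbf{M}$ denotes the block matrix in Lemma~\ref{lemma2}, for instance with $\mathbf{G}=\mathbf{M}^{1/2}$. I then partition the columns of $\mathbf{G}$ conformally with the blocks, writing $\mathbf{G}=[\mathbf{X}\ \ \mathbf{Y}]$, where $\mathbf{X}$ and $\mathbf{Y}$ each have $d$ columns ($d$ the dimension of the task vectors). Reading off the blocks gives
\[
\bm{\Sigma}_1=\mathbf{X}^\top\mathbf{X},\qquad \bm{\Sigma}_2=\mathbf{Y}^\top\mathbf{Y},\qquad \bm{\Sigma}_{12}=\mathbf{X}^\top\mathbf{Y}.
\]
Consequently $\mathrm{tr}(\bm{\Sigma}_{12})=\mathrm{tr}(\mathbf{X}^\top\mathbf{Y})=\langle \mathbf{X},\mathbf{Y}\rangle_F$, while $\mathrm{tr}(\bm{\Sigma}_1)=\|\mathbf{X}\|_F^2$ and $\mathrm{tr}(\bm{\Sigma}_2)=\|\mathbf{Y}\|_F^2$.

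Applying the Cauchy--Schwarz inequality for the Frobenius inner product then yields
\[
|\mathrm{tr}(\bm{\Sigma}_{12})|\le\|\mathbf{X}\|_F\|\mathbf{Y}\|_F=\sqrt{\mathrm{tr}(\bm{\Sigma}_1)\,\mathrm{tr}(\bm{\Sigma}_2)},
\]
which is the claim.

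An equivalent probabilistic route would take $(\mathbf{u},\mathbf{v})$ with covariance $\mathbf{M}$, note that $\mathrm{tr}(\bm{\Sigma}_{12})=\mathbb{E}[(\mathbf{u}-\mathbb{E}\mathbf{u})^\top(\mathbf{v}-\mathbb{E}\mathbf{v})]$, and apply Cauchy--Schwarz in $L^2$. I prefer the factorization route because it avoids any distributional assumption.

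The only point needing care is the existence of the factorization with the correct block structure, which follows from the positive semidefiniteness of $\mathbf{M}$ via its symmetric square root. No further obstacles are expected.
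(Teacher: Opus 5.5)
Your proof is correct, and it ends with the same step as the paper's (Cauchy--Schwarz for the Frobenius inner product), but it reaches that step through a different factorization.

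The paper uses the characterization of positive semidefinite block matrices $\bm{\Sigma}_{12}=\bm{\Sigma}_1^{1/2}\mathbf{R}\,\bm{\Sigma}_2^{1/2}$ with a contraction $\|\mathbf{R}\|_2\le 1$. It then bounds $|\mathrm{tr}(\bm{\Sigma}_1^{1/2}\mathbf{R}\bm{\Sigma}_2^{1/2})|\le\|\bm{\Sigma}_1^{1/2}\|_F\,\|\mathbf{R}\|_2\,\|\bm{\Sigma}_2^{1/2}\|_F$ and uses $\|\bm{\Sigma}_i^{1/2}\|_F^2=\mathrm{tr}(\bm{\Sigma}_i)$. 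You instead write the whole block matrix as a Gram matrix $\mathbf{G}^\top\mathbf{G}$. This gives $\bm{\Sigma}_{12}=\mathbf{X}^\top\mathbf{Y}$ directly and turns the diagonal traces into squared Frobenius norms, so a single Cauchy--Schwarz finishes the proof.

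Your route is more elementary and self-contained, since it needs only the existence of $\mathbf{M}^{1/2}$. The contraction representation is a nontrivial fact that the paper states without proof. Its direct Schur-complement derivation needs pseudo-inverses or a limiting argument when $\bm{\Sigma}_1$ or $\bm{\Sigma}_2$ is singular. Its cleanest proof actually goes through a Gram factorization like yours: take polar decompositions $\mathbf{X}=\mathbf{U}\bm{\Sigma}_1^{1/2}$ and $\mathbf{Y}=\mathbf{V}\bm{\Sigma}_2^{1/2}$, and set $\mathbf{R}=\mathbf{U}^\top\mathbf{V}$.

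What the paper's form gives in return is that it isolates the normalized correlation $\mathbf{R}$. That is the natural starting point for sharper bounds; for example, by the von Neumann trace inequality, $|\mathrm{tr}(\bm{\Sigma}_{12})|\le\|\mathbf{R}\|_2\,\|\bm{\Sigma}_2^{1/2}\bm{\Sigma}_1^{1/2}\|_*$, with $\|\cdot\|_*$ the nuclear norm. The stated lemma does not use this extra structure, so your argument loses nothing.
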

\begin{proof}
Since
\(
\begin{bmatrix}
\bm{\Sigma}_1 & \bm{\Sigma}_{12}\\
\bm{\Sigma}_{12}^\top & \bm{\Sigma}_2
\end{bmatrix}\succeq 0,
\)
there exists a matrix $\mathbf R$ with $\|\mathbf R\|_2\le 1$ such that
\begin{align}
    \bm{\Sigma}_{12}=\bm{\Sigma}_1^{1/2}\mathbf R\,\bm{\Sigma}_2^{1/2}.
\end{align}
By applying trace,
\begin{align}
    \mathrm{tr}(\bm{\Sigma}_{12})
= \mathrm{tr}\!\big(\bm{\Sigma}_1^{1/2}\mathbf R\,\bm{\Sigma}_2^{1/2}\big).
\end{align}
Using the trace inequality $|\mathrm{tr}(\mathbf{A}^\top \mathbf{B})|\le \|\mathbf{A}\|_F\|\mathbf{B}\|_F$ and
$\|\mathbf R X\|_F\le \|\mathbf R\|_2\|X\|_F$, we have
\begin{align}
\big|\mathrm{tr}(\bm{\Sigma}_{12})\big|
&= \big|\mathrm{tr}\!\big((\bm{\Sigma}_1^{1/2})^\top(\mathbf R\bm{\Sigma}_2^{1/2})\big)\big|\notag\\
&\le \|\bm{\Sigma}_1^{1/2}\|_F\,\|\mathbf R\bm{\Sigma}_2^{1/2}\|_F\notag\\
&\le \|\bm{\Sigma}_1^{1/2}\|_F\,\|\mathbf R\|_2\,\|\bm{\Sigma}_2^{1/2}\|_F\notag\\
&\le \|\bm{\Sigma}_1^{1/2}\|_F\,\|\bm{\Sigma}_2^{1/2}\|_F.
\end{align}
Note that $
\|\bm{\Sigma}_1^{1/2}\|_F^2
= \mathrm{tr}((\bm{\Sigma}_1^{1/2})^\top \bm{\Sigma}_1^{1/2})
= \mathrm{tr}(\bm{\Sigma}_1^{1/2}\bm{\Sigma}_1^{1/2})
= \mathrm{tr}(\bm{\Sigma}_1).
$
Similarly, 
\(
\|\bm{\Sigma}_2^{1/2}\|_F^2=\mathrm{tr}(\bm{\Sigma}_2)
\).
Therefore,
\[
\big|\mathrm{tr}(\bm{\Sigma}_{12})\big|
\le \sqrt{\mathrm{tr}(\bm{\Sigma}_1)\,\mathrm{tr}(\bm{\Sigma}_2)}.
\]
\end{proof}

\newpage
\section{Implementation}\label{Implementation}

Let $\{(\mathbf{x}_n,\mathbf{y}_n)\}_{n=1}^N$ be a random data batch from training set of client $i$.
For two well-trained models with parameters $\bm{\mu}_{\mathrm{FedIT}}$ and $\bm{\mu}_{\mathrm{Local}}$, the per-sample gradients are given by
\begin{align}
    \mathbf{g}_{\mathrm{FedIT}}^{(n)} \,\triangleq\, \nabla_{\bm{\mu}_{\mathrm{FedIT}}}\,\ell(\bm{\mu}_{\mathrm{FedIT}}; \mathbf{x}_n,\mathbf{y}_n),
\qquad
\mathbf{g}_{\mathrm{Local}}^{(n)} \,\triangleq\, \nabla_{\bm{\mu}_{\mathrm{Local}}}\,\ell(\bm{\mu}_{\mathrm{Local}}; \mathbf{x}_n,\mathbf{y}_n),
\end{align}
where $\ell$ is the per-sample task loss. For each parameter coordinate $k\in\{1,\dots,d\}$, compute the mean and variance of per-sample gradient, and covariance as
\begin{align}
    \bar g_{{\mathrm{FedIT}},k} \,=\, \frac{1}{N}\sum_{n=1}^N g_{{\mathrm{FedIT}},k}^{(n)},\quad
    \bar g_{{\mathrm{Local}},k} \,=\, \frac{1}{N}\sum_{n=1}^N g_{{\mathrm{Local}},k}^{(n)},
\end{align}
\begin{align}
    \mathrm{Var}_{\mathrm{FedIT}}(k) \,=\, \frac{1}{N}\sum_{n=1}^N \!\big(g_{{\mathrm{FedIT}},k}^{(n)}-\bar g_{{\mathrm{FedIT}},k}\big)^{2},
\quad
\mathrm{Var}_{\mathrm{Local}}(k) \,=\, \frac{1}{N}\sum_{n=1}^N \!\big(g_{{\mathrm{Local}},k}^{(n)}-\bar g_{{\mathrm{Local}},k}\big)^{2},
\end{align}
\begin{align}
    \mathrm{Cov}_{\mathrm{Cross}}(k) \,=\, \frac{1}{N}\sum_{n=1}^N \!\big(g_{{\mathrm{FedIT}},k}^{(n)}-\bar g_{{\mathrm{FedIT}},k}\big)\big(g_{{\mathrm{Local}},k}^{(n)}-\bar g_{{\mathrm{Local}},k}\big).
\end{align}
The gradient correlation coefficient for coordinate $k$ is defined as
\begin{align}
    \rho_k \;\triangleq\;
\frac{\mathrm{Cov}_{\mathrm{Cross}}(k)}{\sqrt{\mathrm{Var}_{\mathrm{FedIT}}(k)\,\mathrm{Var}_{\mathrm{Local}}(k)}}\,,
\qquad \rho_k\in[-1,1].
\end{align}
We adopt a diagonal approximation for the cross-covariance,
\[
(\bm{\Sigma}_{\mathrm{Cross}})_{kk} \;\triangleq\; \rho_k\,\sqrt{(\bm{\Sigma}_{\mathrm{FedIT}})_{kk}\,(\bm{\Sigma}_{\mathrm{Local}})_{kk}},
\qquad k=\{1,\dots,d\},
\]
and clip $\rho_k$ to ensure $\bm{\Sigma}_{\mathrm{Cross}}\succeq 0$ and $\bm{\Sigma}_{\mathrm{Cross}}\preceq \bm{\Sigma}_{\mathrm{FedIT}},\bm{\Sigma}_{\mathrm{Local}}$:
\begin{align}
&  \rho_k \triangleq \max\!\bigl\{0,\ \min\{\rho_k,\ \rho_k^{\max}\}\bigr\},
\quad \mathrm{where}\,\, 
\rho_k^{\max}
= \min\!\left\{\sqrt{\frac{(\bm{\Sigma}_{\mathrm{FedIT}})_{kk}}{(\bm{\Sigma}_{\mathrm{Local}})_{kk}}},\,
\sqrt{\frac{(\bm{\Sigma}_{\mathrm{Local}})_{kk}}{(\bm{\Sigma}_{\mathrm{FedIT}})_{kk}}}\right\}.
\end{align}
This yields a PSD diagonal $\bm{\Sigma}_{\mathrm{Cross}}$ with each element bounded by the corresponding entries of $\bm{\Sigma}_{\mathrm{FedIT}}$ and $\bm{\Sigma}_{\mathrm{Local}}$.

Then, we calculate the scalar $c$ as
\begin{align}
    c \;\triangleq\; \mathrm{tr}(\bm{\Sigma}_{\mathrm{Cross}})
\,=\, \sum_{k=1}^d \,(\bm{\Sigma}_{\mathrm{Cross}})_{kk}
\,=\, \sum_{k=1}^d \,\rho_k\,\sqrt{(\bm{\Sigma}_{\mathrm{FedIT}})_{kk}\,(\bm{\Sigma}_{\mathrm{Local}})_{kk}}
\end{align}

Note that in practice $\bm{\theta}$ is parameterized by
LoRA modules $\{\mathbf B, \mathbf A\}$ and the actual trainable parameters
are $\mathbf B$ and $\mathbf A$.
Therefore, the Fisher information matrix in
Eq.~\eqref{eq:diag_fisher} is estimated with respect to the LoRA parameters
$\{\mathbf B, \mathbf A\}$  rather than the composed matrix product $\mathbf B\mathbf A$.
Accordingly, we estimate the diagonal Fisher information for
$\mathbf B$ and $\mathbf A$, and perform merging separately for $\mathbf B$ and $\mathbf A$, instead of first forming $\bm{\theta}=\mathbf B\mathbf A$ and computing its Fisher information.

\newpage

\section{Experiments on FLAN benchmark}\label{appendix:c}

\subsection{Performance Comparison}

We evaluate \textsc{Potara} on the FLAN benchmark~\citep{chung2024scaling}, which consists of over 60 datasets covering more than 12 NLP task categories.
To simulate an extreme heterogeneity scenario, we consider eight clients, each trained on a completely different dataset. 
This setting is highly task-heterogeneous and is substantially more extreme than the CIFAR-100 and commonsense reasoning setups.
Dataset details, task descriptions, and client assignments are summarized in Appendix~\ref{FLAN_data}. 
We adopt Qwen3-4B-Instruct-2507~\citep{yang2025qwen3} as the pre-trained model and report ROUGE-1 for all tasks. 
Baselines are trained for 600 rounds, except local fine-tuning which runs for 300 rounds to mitigate overfitting. 
For \textsc{Potara}, we train FedIT for 300 rounds and local fine-tuning for 300 rounds, and merge the two resulting models to obtain the final model.
As shown in Table \ref{table_flan}, \textsc{Potara} achieves the best overall ROUGE-1 on the FLAN benchmark and consistently performing strongly across diverse task categories, indicating that merging global knowledge with personalized knowledge is especially effective under extreme task heterogeneity.
Meanwhile, \textsc{Potara} attains these improvements while reducing communication cost because it merges an intermediate FedIT model with local model.

\begin{table*}[h]
\caption{ROUGE-1 performance on FLAN benchmark (Qwen3-4B-Instruct model).}
\label{table_flan}
\centering
\small
\resizebox{\linewidth}{!}{
\begin{tabular}{l|cccccccc|c}
\multicolumn{10}{c}{\textbf{FLAN benchmark (Qwen3-4B-Instruct model)}} \\
\toprule
\textbf{Methods} 
& \textbf{Closed-book QA} 
& \textbf{Reading Comp.} 
& \textbf{Coreference} 
& \textbf{Paraphrase} 
& \textbf{Commonsense} 
& \textbf{NLI} 
& \textbf{Reading} 
& \textbf{Sentiment} 
& \textbf{Avg.} \\
\midrule
Local       
& 69.53 {\com 0.75} & 69.38 {\com 2.81} & 88.01 {\com 0.70} & 80.00 {\com 3.56} & 87.62 {\com 0.78} & 85.78 {\com 2.18} & 73.87 {\com 0.66} & 76.16 {\com 1.69} & 78.79 {\com 0.99} \\
FFA-LoRA    
& 71.00 {\com 0.66} & 70.54 {\com 0.66} & 77.18 {\com 1.34} & 79.89 {\com 0.57} & 85.65 {\com 1.18} & 86.33 {\com 1.18} & 73.57 {\com 0.39} & 77.06 {\com 0.74} & 77.65 {\com 0.70} \\
FedSA       
& 69.54 {\com 0.95} & 69.54 {\com 1.09} & 87.78 {\com 2.61} & 81.89 {\com 0.87} & 84.05 {\com 1.16} & 86.00 {\com 1.19} & 72.72 {\com 0.65} & 76.47 {\com 0.98} & 78.50 {\com 0.25} \\
FedIT       
& 69.96 {\com 0.57} & 71.37 {\com 0.63} & 83.19 {\com 0.51} & 80.44 {\com 1.40} & 85.37 {\com 0.96} & 86.33 {\com 1.25} & 73.78 {\com 0.85} & 76.68 {\com 0.56} & 78.39 {\com 0.63} \\
FedDPA       
& 70.03 {\com 0.19} & 70.57 {\com 0.34} & 81.89 {\com 1.37} & 81.44 {\com 0.79} & 86.93 {\com 1.71} & 87.55 {\com 0.88} & 74.01 {\com 0.02} & 76.45 {\com 0.88} & 78.61 {\com 0.29} \\
FedALT& 68.07 {\com 1.51} & 72.50 {\com 0.77} & 84.84 {\com 0.51} & 82.44 {\com 1.03} & 85.17 {\com 0.07} & 86.78 {\com 0.57} & 73.43 {\com 0.86} & 77.40 {\com 1.02} & 78.83 {\com 0.26} \\
\rowcolor{ours} \textsc{Potara}     
& 70.61 {\com 0.54} & 70.61 {\com 2.06} & 88.26 {\com 1.77} & 81.56 {\com 0.42} & 88.80 {\com 0.23} & 86.78 {\com 1.10} & 75.18 {\com 0.78} & 76.10 {\com 1.01} & 79.74 {\com 0.27} \\
\bottomrule
\end{tabular}
}
\end{table*}

\subsection{FLAN benchmark} \label{FLAN_data}

We follow the experimental setup in \citep{bian2025fedalt,yang2024dual} to simulate a client-level task heterogeneity setting using the FLAN datasets \citep{chung2024scaling}.
The FLAN collection comprises over 12 natural language processing tasks, each of which contains multiple datasets.
These tasks are generated from diverse contextual factors and inherently exhibit complex and heterogeneous data distribution shifts.
We consider an extreme client heterogeneity setting where 8 clients are assigned completely different NLP tasks. 
To simulate a realistic environment where each client has limited access to large-scale datasets, we adopt a downsampling strategy that assigns 2000 training samples and 300 test samples to each client.
The specific task and dataset assignments are summarized in Table~\ref{FLAN_data_table}.

\begin{table}[h]
\caption{FLAN benchmark.}
\label{FLAN_data_table}
\centering
\begin{tabular}{l l l}
\toprule
\textbf{Task} & \textbf{Dataset} & \textbf{Description} \\
\midrule
Closed-book QA 
& arc\_challenge 
& Answer multiple-choice questions. \\

Reading Comp. w/ Commonsense 
& cosmos\_qa 
& Reading comprehension with commonsense. \\

Coreference Resolution 
& definite\_pronoun\_resolution 
& Resolve ambiguous pronouns. \\

Paraphrase Detection 
& glue\_qqp 
& Identify semantic equivalence. \\

Commonsense Reasoning 
& hellaswag 
& Choose the most plausible continuation. \\

Natural Language Inference 
& mnli 
& Infer relations between sentence pairs. \\

Reading Comprehension 
& squad\_v1 
& Extract answers from passages. \\

Sentiment Analysis 
& sst2 
& Sentence-level sentiment classification. \\
\bottomrule
\end{tabular}
\end{table}

\newpage
\section{Experiments with Increased Number of Clients}\label{appendix:d}

In this section, we extend our experiments to settings with a larger number of clients.
For the personalized CIFAR-100 benchmark, we increase the number of clients to 20, with each client following a distinct label distribution. 
For the commonsense reasoning benchmark, we increase the number of clients to 16, where every two clients share the same task but have different training samples.
Table \ref{table_cifar100_20clients} and \ref{table_cs_16clients} show that with increased number of clients, \textsc{Potara} consistently outperforms all baselines, demonstrating robust performance gains  at larger scale.

\begin{table*}[h]
\caption{Testing accuracy on personalized CIFAR-100 benchmark with 20 clients.}
\label{table_cifar100_20clients}
\centering
\small
\resizebox{0.9\linewidth}{!}{%
\begin{tabular}{l|cccccc >{\columncolor{ours}}c}
\toprule
Client & Local & FFA-LoRA & FedSA & FedIT & FedDPA & Fisher Merging & \textsc{Potara}\\
\midrule
Client 1  & 63.53 {\com 2.22} & 44.37 {\com 1.97} & 61.28 {\com 2.27} & 59.52 {\com 0.73} & 67.41 {\com 2.92} & 62.00 {\com 2.46} & 62.39 {\com 3.91} \\
Client 2  & 62.28 {\com 5.39} & 49.25 {\com 6.36} & 61.16 {\com 5.72} & 61.35 {\com 6.12} & 65.95 {\com 6.74} & 61.39 {\com 9.09} & 70.18 {\com 5.12} \\
Client 3  & 60.40 {\com 0.23} & 45.24 {\com 1.43} & 58.25 {\com 2.07} & 64.25 {\com 2.42} & 65.39 {\com 3.34} & 59.64 {\com 2.42} & 70.09 {\com 1.34} \\
Client 4  & 58.45 {\com 2.62} & 39.74 {\com 0.34} & 57.29 {\com 1.19} & 56.98 {\com 2.56} & 59.52 {\com 0.55} & 62.54 {\com 0.71} & 63.63 {\com 1.26} \\
Client 5  & 62.07 {\com 2.18} & 50.88 {\com 4.72} & 62.73 {\com 2.31} & 66.00 {\com 3.74} & 68.64 {\com 1.89} & 66.22 {\com 0.01} & 67.94 {\com 4.36} \\
Client 6  & 64.14 {\com 0.67} & 47.09 {\com 2.28} & 61.07 {\com 0.85} & 65.06 {\com 2.94} & 65.34 {\com 2.65} & 59.56 {\com 0.25} & 67.00 {\com 0.09} \\
Client 7  & 58.82 {\com 0.95} & 40.24 {\com 1.49} & 58.12 {\com 0.59} & 57.96 {\com 1.45} & 58.85 {\com 3.59} & 61.12 {\com 2.24} & 61.66 {\com 0.95} \\
Client 8  & 62.78 {\com 1.61} & 44.65 {\com 0.02} & 59.21 {\com 0.43} & 63.37 {\com 3.22} & 64.41 {\com 1.21} & 63.31 {\com 0.11} & 65.35 {\com 2.42} \\
Client 9  & 62.58 {\com 1.40} & 42.76 {\com 2.31} & 60.06 {\com 1.32} & 59.63 {\com 4.14} & 60.85 {\com 3.94} & 65.22 {\com 0.23} & 66.83 {\com 0.98} \\
Client 10 & 62.14 {\com 1.33} & 44.13 {\com 2.60} & 59.58 {\com 4.50} & 62.31 {\com 2.31} & 65.35 {\com 1.79} & 62.25 {\com 0.61} & 66.92 {\com 1.45} \\
Client 11 & 60.16 {\com 1.79} & 44.74 {\com 1.14} & 59.23 {\com 2.08} & 64.24 {\com 1.03} & 62.61 {\com 1.03} & 57.62 {\com 0.73} & 61.96 {\com 0.85} \\
Client 12 & 57.86 {\com 1.87} & 44.80 {\com 2.32} & 55.35 {\com 0.39} & 65.66 {\com 1.49} & 60.01 {\com 0.94} & 59.58 {\com 2.15} & 63.40 {\com 1.89} \\
Client 13 & 61.04 {\com 1.68} & 46.04 {\com 7.54} & 61.00 {\com 1.22} & 61.74 {\com 1.95} & 64.73 {\com 3.67} & 60.20 {\com 4.24} & 66.48 {\com 1.17} \\
Client 14 & 60.64 {\com 2.41} & 43.08 {\com 1.47} & 58.89 {\com 4.14} & 60.73 {\com 3.29} & 59.39 {\com 8.12} & 64.56 {\com 4.91} & 63.97 {\com 4.32} \\
Client 15 & 59.61 {\com 1.47} & 37.66 {\com 0.69} & 57.57 {\com 1.00} & 61.24 {\com 1.12} & 60.62 {\com 2.62} & 59.30 {\com 1.39} & 63.62 {\com 2.54} \\
Client 16 & 57.85 {\com 0.45} & 42.66 {\com 0.24} & 57.34 {\com 0.59} & 60.27 {\com 3.41} & 58.12 {\com 0.53} & 59.75 {\com 2.71} & 63.39 {\com 1.66} \\
Client 17 & 62.09 {\com 4.21} & 43.70 {\com 8.42} & 58.79 {\com 2.54} & 64.45 {\com 5.90} & 61.59 {\com 4.52} & 63.83 {\com 3.02} & 64.84 {\com 3.86} \\
Client 18 & 58.45 {\com 2.12} & 43.16 {\com 0.08} & 57.26 {\com 0.93} & 59.52 {\com 0.82} & 57.10 {\com 1.57} & 59.87 {\com 3.55} & 64.88 {\com 2.82} \\
Client 19 & 62.75 {\com 3.68} & 50.26 {\com 1.38} & 63.88 {\com 1.85} & 66.43 {\com 1.42} & 66.52 {\com 2.21} & 66.39 {\com 2.62} & 71.42 {\com 0.54} \\
Client 20 & 58.68 {\com 0.28} & 43.53 {\com 2.69} & 60.81 {\com 0.45} & 63.67 {\com 3.51} & 62.98 {\com 1.16} & 60.39 {\com 2.02} & 67.26 {\com 0.28} \\
\midrule
\textbf{Average} & 60.81 {\com 0.57} & 44.40 {\com 1.09} & 59.45 {\com 0.95} & 62.22 {\com 0.93} & 62.77 {\com 0.63} & 61.73 {\com 1.45} & 65.66 {\com 0.82} \\
\bottomrule
\end{tabular}
}
\end{table*}

\begin{table*}[h]
\caption{Testing accuracy on commonsense reasoning benchmark with 16 clients.}
\label{table_cs_16clients}
\centering
\small
\resizebox{0.9\linewidth}{!}{
\begin{tabular}{l|cccccc >{\columncolor{ours}}c}
\toprule
Client & Local & FFA-LoRA & FedSA & FedIT & FedALT &FedDPA  & \textsc{Potara}\\
\midrule
Client 1 (ARC-c)       &71.80 {\com 0.20} &73.00 {\com 0.80} &70.60 {\com 0.40} &73.60 {\com 0.00} &73.50 {\com 0.70} &74.50 {\com 0.90} &74.80 {\com 0.20} \\
Client 2 (ARC-c)       &70.50 {\com 0.50} &73.10 {\com 0.30} &72.20 {\com 1.20} &73.60 {\com 0.00} &72.40 {\com 0.00} &74.50 {\com 0.10} &75.20 {\com 1.00} \\
Client 3 (ARC-e)       &85.80 {\com 0.20} &87.50 {\com 0.50} &86.20 {\com 1.20} &87.30 {\com 0.70} &85.90 {\com 0.70} &86.90 {\com 0.30} &88.50 {\com 0.30} \\
Client 4 (ARC-e)       &86.70 {\com 0.50} &87.10 {\com 0.50} &85.90 {\com 1.30} &87.30 {\com 0.70} &85.00 {\com 0.20} &87.40 {\com 0.60} &87.90 {\com 0.30} \\
Client 5 (BoolQ)       &65.20 {\com 3.00} &67.00 {\com 0.60} &61.60 {\com 5.60} &64.30 {\com 3.50} &66.70 {\com 0.10} &66.90 {\com 0.10} &65.60 {\com 0.60} \\
Client 6 (BoolQ)       &67.90 {\com 1.50} &63.50 {\com 1.70} &64.90 {\com 0.70} &64.30 {\com 3.50} &65.50 {\com 0.30} &64.20 {\com 3.00} &66.70 {\com 1.70} \\
Client 7 (HellaSwag)   &68.50 {\com 1.50} &58.50 {\com 0.10} &66.70 {\com 1.90} &66.90 {\com 1.10} &69.00 {\com 0.20} &65.00 {\com 0.00} &69.70 {\com 1.30} \\
Client 8 (HellaSwag)   &68.10 {\com 0.10} &59.70 {\com 0.50} &63.90 {\com 2.50} &66.90 {\com 1.10} &68.00 {\com 1.20} &68.10 {\com 0.70} &68.30 {\com 0.90} \\
Client 9 (OBQA)        &75.90 {\com 0.50} &71.30 {\com 2.90} &72.90 {\com 0.30} &76.70 {\com 0.10} &75.80 {\com 1.00} &78.00 {\com 1.80} &78.70 {\com 0.50} \\
Client 10 (OBQA)       &74.90 {\com 2.50} &73.80 {\com 0.40} &73.50 {\com 1.30} &76.70 {\com 0.10} &76.00 {\com 1.00} &77.80 {\com 1.00} &77.90 {\com 0.70} \\
Client 11 (PIQA)       &80.20 {\com 0.80} &79.40 {\com 1.80} &77.40 {\com 0.40} &80.60 {\com 0.60} &79.80 {\com 0.20} &80.20 {\com 0.60} &81.90 {\com 0.90} \\
Client 12 (PIQA)       &79.40 {\com 1.00} &77.60 {\com 2.60} &78.30 {\com 0.70} &80.60 {\com 0.60} &79.20 {\com 1.60} &81.20 {\com 0.60} &81.90 {\com 0.30} \\
Client 13 (SIQA)       &67.70 {\com 1.10} &67.60 {\com 0.20} &67.00 {\com 0.80} &70.50 {\com 0.10} &70.80 {\com 3.00} &71.30 {\com 1.50} &73.00 {\com 1.60} \\
Client 14 (SIQA)       &67.80 {\com 1.20} &67.70 {\com 0.70} &68.30 {\com 0.70} &70.50 {\com 0.10} &66.50 {\com 0.50} &69.20 {\com 1.80} &72.40 {\com 1.80} \\
Client 15 (WinoGrande) &67.00 {\com 0.20} &52.20 {\com 2.60} &66.20 {\com 0.40} &62.00 {\com 0.80} &63.40 {\com 0.00} &62.10 {\com 0.50} &62.80 {\com 0.60} \\
Client 16 (WinoGrande) &64.80 {\com 1.00} &51.30 {\com 1.70} &64.30 {\com 1.30} &62.00 {\com 0.80} &63.20 {\com 0.40} &61.70 {\com 0.10} &63.60 {\com 0.20} \\
\midrule
\textbf{Average}  &72.64 {\com 0.34} &69.39 {\com 0.38} &71.25 {\com 0.35} &72.74 {\com 0.24} &72.55 {\com 0.53} &73.06 {\com 0.23} &74.31 {\com 0.58} \\
\bottomrule
\end{tabular}
}
\end{table*}

\section{Client Performance with Various Mixing Weight}

In this section, we investigate the effect of the mixing weight $\lambda$ that balances the FedIT model and the local fine-tuning model in the proposed \textsc{Potara} framework. 
Specifically, we vary $\lambda \in \{0,0.1,\ldots,1\}$ and evaluate client-wise performance under different mixing weights, where $\lambda=1$ corresponds to using the FedIT model alone and $\lambda=0$ corresponds to purely local fine-tuning. 
We conduct this analysis on commonsense reasoning benchmark with LLaMA.
Figure~\ref{fig:lambda_analysis} shows that the performance of nearly all clients exhibits a clear unimodal trend with respect to $\lambda$, with intermediate mixing weights outperforming both extreme settings.
This observation confirms that neither purely general knowledge nor purely personalized knowledge is sufficient, and that an appropriate combination of the two yields superior performance. 
Moreover, as the FedIT model is trained for more communication rounds, the peak of the unimodal curves shifts noticeably toward larger $\lambda$ for nearly all clients.
Consistently, the mixing weights estimated by our method (indicated by red dashed lines) also move to the right as the FedIT round increases.
This behavior indicates that a better-trained FedIT model contributes more reliable general knowledge to the merged model.
This rightward shift further suggests that the Fisher information associated with the FedIT model increases as training proceeds, which in turn assigns a larger weight to the FedIT model during model merging.

\begin{figure*}[t]
    \centering

    \begin{subfigure}{0.99\linewidth}
        \centering
        \includegraphics[width=\linewidth]{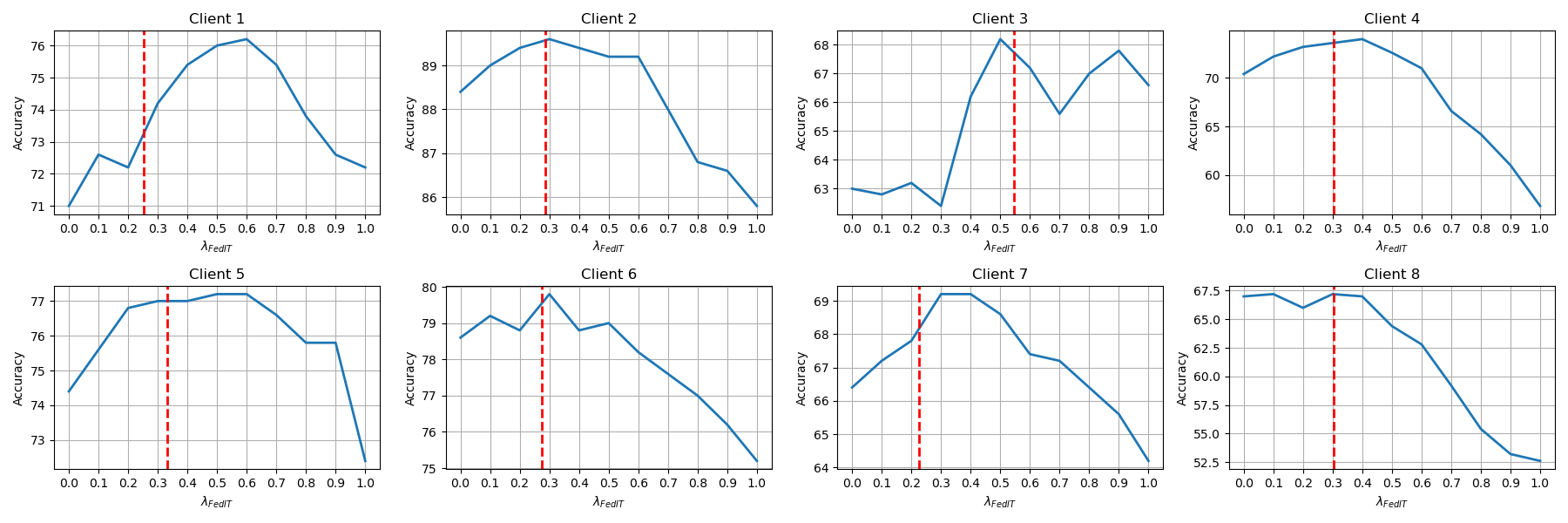}
        \caption{Merging FedIT model at round 50 with lcoal model at round 300 under varying mixing weight.}
        
    \end{subfigure}

    \vspace{2mm}

    \begin{subfigure}{0.99\linewidth}
        \centering
        \includegraphics[width=\linewidth]{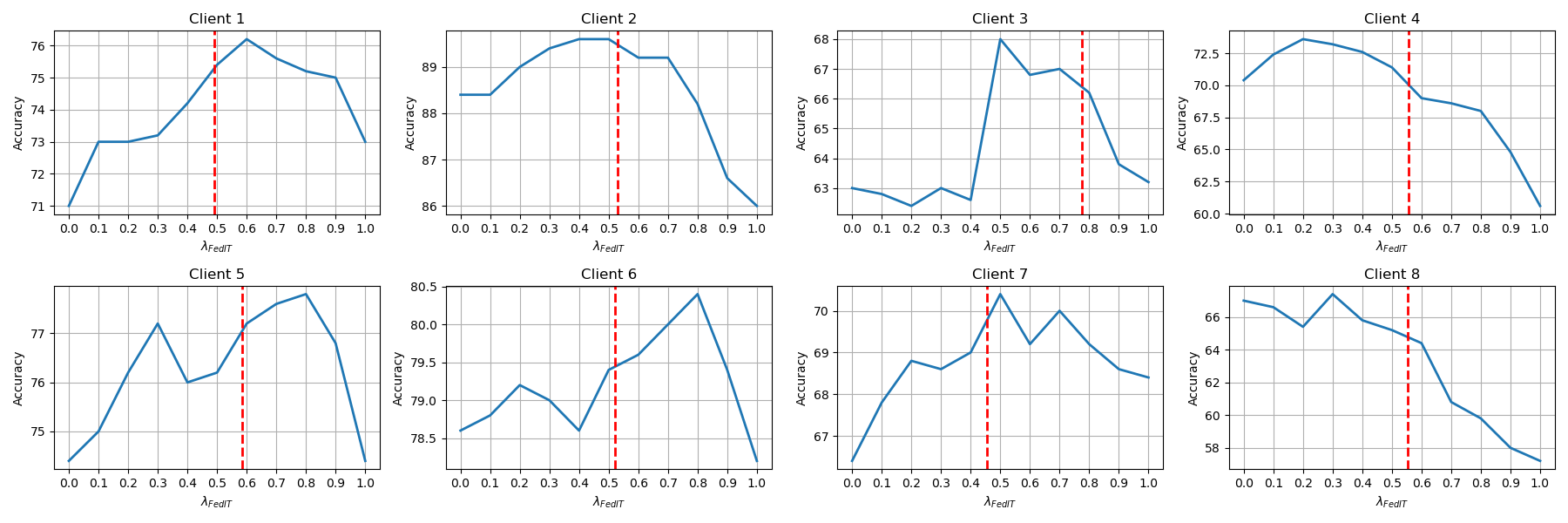}
        \caption{Merging FedIT mdoel at round 150 with local model at round 300 under varying mixing weight.}
    \end{subfigure}

    \vspace{2mm}

    \begin{subfigure}{0.99\linewidth}
        \centering
        \includegraphics[width=\linewidth]{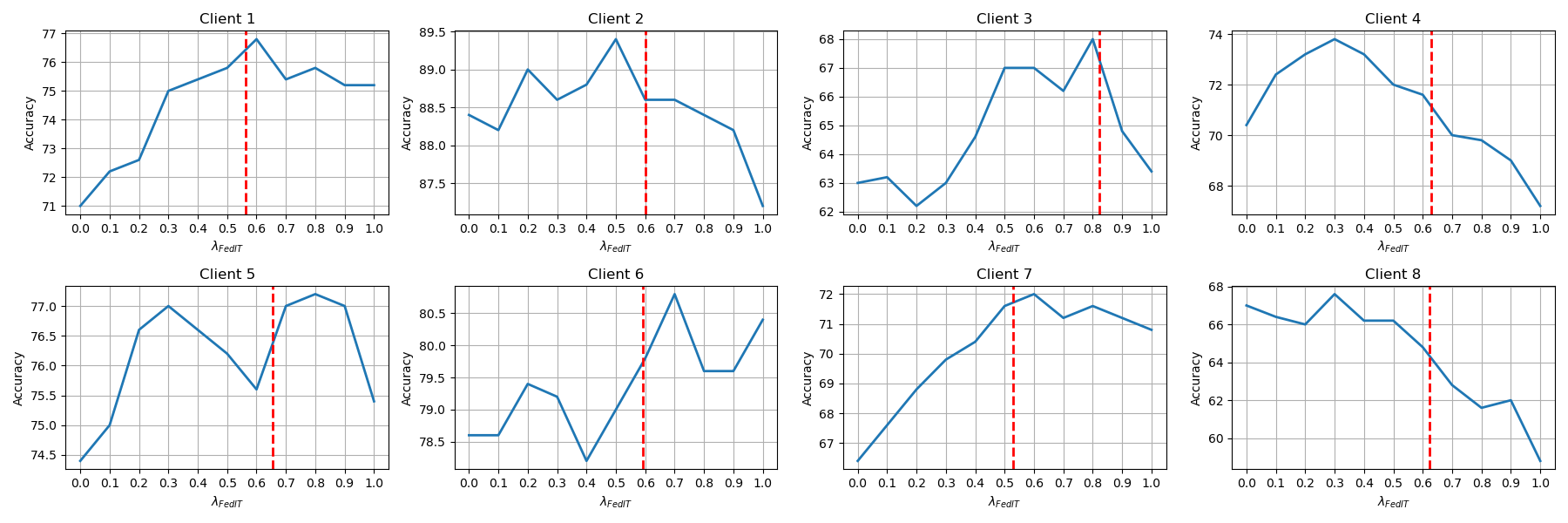}
        \caption{Merging FedIT model at round 300 with local model at round 300 under varying mixing weight.}
    \end{subfigure}

    \caption{Client performance under various mixing weights.
    For each client, we construct a personalized model by merging a FedIT checkpoint at round 50,150, or 300 with a fixed Local model at round 300, and sweep the local mixing weight \(\lambda_{\text{FedIT}}\) (x-axis). Curves report the resulting client accuracy. 
    The red dashed line marks the mixing weight calculated by our method for each client.
    }
    
    \label{fig:lambda_analysis}
\end{figure*}

\section{Computation Time}\label{fisher_time}
We report the computation time required to compute the mixing weights on three benchmarks, as well as the training time of the FedIT model and the local model on the same benchmarks.
We randomly sample 30 data samples from the training set of each client.
As shown in Table~\ref{tab:mixing_weight_time}, the computation time for mixing weights is modest compared to the overall training cost.
Notably, this procedure is executed only once and does not introduce additional per-round communication or optimization overhead in the federated learning process.

\begin{table}[h]
\centering
\caption{Computation time and training time.}
\label{tab:mixing_weight_time}
\small
\begin{tabular}{l|c|c|c}
\toprule
\textbf{Task} & \textbf{Pre-trained Model} & 
\textbf{Mixing Weights Computation Time } & 
\textbf{FedIT + Local Training Time } \\
\midrule
CIFAR-100 & ViT-B/16            & 52.06s & 0.83h \\
Commonsense Reasoning &LLaMA-3.2-3B & 0.46h & 5.92h \\
FLAN & Qwen3-4B-Instruct-2507                       & 1.18h & 6.83h \\
\bottomrule
\end{tabular}
\end{table}

\section{Hyperparameters}

Unless stated otherwise,the hyperparameters used in this work are as follows.

\begin{table}[h]
\centering
\caption{Hyperparameter setting.}
\resizebox{\linewidth}{!}{
\begin{tabular}{l|c|c|c}
\hline
\textbf{Hyperparameter} 
    & \textbf{ViT-B/16 \& CIFAR-100} 
    & \textbf{LLaMA-3.2-3B-Instruct \& commonsense reasoning} 
    & \textbf{Qwen3-4B-Instruct-2507} \&  \textbf{FLAN} \\
\hline
Batch size                  
    & 10 
    & 4 
    & 4 \\
Learning rate                  
    & 0.001
    & 0.0001
    & 0.0001 \\
LoRA dropout rate           
    & 0.1 
    & 0.05
    & 0.05 \\

Local iteration number 
    & 5 
    & 4
    & 4 \\

Target module               
    & {[}``query'', ``value''{]} 
    & {[}``q\_proj'', ``k\_proj'', ``v\_proj''{]} 
    & {[}``q\_proj'', ``k\_proj'', ``v\_proj''{]} 
    \\

Number of data samples to compute mixing weights
    & 30 
    & 30
    & 30 \\

\hline
\end{tabular}
}
\end{table}

\section{Difference between Fisher Merging and \textsc{Potara}}

Our proposed method \textsc{Potara} differs from Fisher merging in two fundamental aspects.

First, these two methods operate in different parameter spaces, which leads to vastly different computational costs.
Fisher merging is performed in the full parameter space of the model. When the model size is small (e.g., fewer than hundred million parameters), the computation of the Fisher information matrix is manageable and does not introduce significant overhead. However, for LLMs, whose parameter size typically exceeds one billion, computing the Fisher information for all parameters becomes computationally prohibitive.
In contrast, our method performs merging in the task vector space and computes the Fisher information only for the LoRA parameters, whose dimensionality is several orders of magnitude smaller than that of the full model. As a result, our approach incurs substantially lower computational overhead and remains practical for LLMs.

Second, Fisher merging assumes independence between the models being merged and ignores their correlation structure.
Specifically, Fisher merging combines models by weighting them according to their individual Fisher information matrices, without considering correlation between models. 
In contrast, our method explicitly models the correlation between the federated model and the local model, which allows us to derive a more principled and optimal mixing weight.


\end{document}